\documentclass[wcp]{jmlr}


\usepackage{dsfont}

\newtheorem{hypothesis}{Hypothesis}

\usepackage{longtable}

\usepackage{booktabs}

\usepackage{lineno}
\usepackage[export]{adjustbox}

\DeclareMathOperator*{\E}{\mathbb{E}}
\DeclareMathOperator*{\Var}{\mathrm{Var}}
\DeclareMathOperator*{\Cov}{\mathrm{Cov}}
\DeclareMathOperator*{\Corr}{\mathrm{Corr}}

\pagenumbering{gobble}

\makeatletter
\let\Ginclude@graphics\@org@Ginclude@graphics 
\makeatother

\jmlrvolume{222}
\jmlryear{2023}
\jmlrworkshop{ACML 2023}

\title[Intractability of Learning the Discrete Logarithm]{Intractability of Learning the Discrete Logarithm with Gradient-Based Methods}



  \author{\Name{Rustem Takhanov} \Email{rustem.takhanov@nu.edu.kz}\\
   \Name{Maxat Tezekbayev} \Email{maxat.tezekbayev@nu.edu.kz}\\
   \Name{Artur Pak} \Email{artur.pak@nu.edu.kz}\\
   \addr{Department of Mathematics, Nazarbayev University, Astana, Kazakhstan}
   \AND
   \Name{Arman Bolatov} \Email{arman.bolatov@nu.edu.kz}\\
   \addr{Department of Computer Science, Nazarbayev University, Astana, Kazakhstan}
   \AND
   \Name{Zhibek Kadyrsizova} \Email{zhibek.kadyrsizova@nu.edu.kz}\\
   \addr{Department of Mathematics, Nazarbayev University, Astana, Kazakhstan}
   \AND
   \Name{Zhenisbek Assylbekov} \Email{zassylbe@pfw.edu}\\
 \addr Department of Mathematical Sciences, Purdue University Fort Wayne, Fort Wayne, IN, USA}


\editors{Berrin Yan{\i}ko\u{g}lu and Wray Buntine}

\begin{document}

\maketitle

\begin{abstract}
The discrete logarithm problem is a fundamental challenge in number theory with significant implications for cryptographic protocols. In this paper, we investigate the limitations of gradient-based methods for learning the parity bit of the discrete logarithm in finite cyclic groups of prime order. Our main result, supported by theoretical analysis and empirical verification, reveals the concentration of the gradient of the loss function around a fixed point, independent of the logarithm's base used. This concentration property leads to a restricted ability to learn the parity bit efficiently using gradient-based methods, irrespective of the complexity of the network architecture being trained.

Our proof relies on Boas-Bellman inequality in inner product spaces and it involves establishing approximate orthogonality of discrete logarithm's parity bit functions through the spectral norm of certain matrices. Empirical experiments using a neural network-based approach further verify the limitations of gradient-based learning, demonstrating the decreasing success rate in predicting the parity bit as the  group order increases.
\end{abstract}
\begin{keywords}
Discrete Logarithm, Gradient-based Learning, Cryptographic Protocols.
\end{keywords}

\section{Introduction}

Today, artificial intelligence is able to solve problems that seemed extremely difficult for machines 10 years ago. The most famous success stories include the victory of the machine over a professional Go player \citep{silver2016mastering}; prediction of the spatial structure of a protein with high accuracy \citep{jumper2021highly}; a chatbot capable of working in a conversational mode, supporting requests in natural languages \citep{chatgpt}. Since all these stories are based on deep neural networks trained by gradient-based methods, it may seem that at this pace there will soon be no problems that would be beyond the capacity of gradient-based learning. 

However, \cite{DBLP:conf/icml/Shalev-ShwartzS17} showed the failure of gradient-based methods to learn some rather simple functions such as the parity function. In this paper, we give an example of another simple function that gradient-based methods provably cannot learn. This function is a parity bit of the \textbf{discrete logarithm} in the additive group of integers modulo $p$. It is known that with the help of the extended Euclidean algorithm, the discrete logarithm in this group can be computed in $O(n)$ operations, where $n$ is the bitlength of $p$ (see Section~\ref{sec:prelim} for details). So there is a Boolean circuit with complexity $O(n^2)$, which implements a parity bit of the discrete logarithm (with a fixed base). Since each logic gate can be implemented by a small number of neurons and weights, this circuit can be converted into  a compact neural network with $\mathrm{poly}(n)$ parameters. However, we prove formally that gradient-based methods cannot efficiently train such a network. 

In fact, we have obtained a more general result (Theorem~\ref{thm:main}) which says that when trying to learn the parity bit of the discrete logarithm in \emph{any} finite cyclic group of prime order, and not only in $(\mathbb{Z}_p,+)$, the gradient carries negligible information about the target function. It has long been known in cryptography that the discrete logarithm problem (DLP) in a carefully chosen cyclic group (for example, in the group of points on an elliptic curve \citep{DBLP:conf/crypto/Miller85}) is hard in the sense that at the moment there is no $\mathrm{poly}(n)$ algorithm for solving the DLP in general case. The intractability of the DLP in such groups forms the basis for various cryptographic protocols, including public-key encryption, digital signatures \citep{DBLP:journals/tit/Elgamal85}, and key exchange \citep{DBLP:journals/tit/DiffieH76}. From this point of view, our result is the \textbf{provable security of DLP-based cryptosystems against gradient-based attacks}.

\subsection{Related Work}
The main source of inspiration for us is the work of \cite{DBLP:conf/icml/Shalev-ShwartzS17}, which, among other things, shows the intractability of learning a class of orthogonal functions using gradient-based methods. We emphasize that their result is not directly applicable to the class of functions that we consider in this paper (the parity bit of the discrete logarithm), since these functions are not orthogonal with respect to a uniform distribution over the domain. However, they are \emph{approximately} pairwise orthogonal, and the proof of this fact is the core of our work (Section~\ref{sec:proofs}). In addition, our adaptation of the proof method by \cite{DBLP:conf/icml/Shalev-ShwartzS17} using the Boas-Bellman inequality (Section~\ref{sec:proof_main}) deserves special attention, as it allows us to extend the failure of gradient-based learning  to a wider class of approximately orthogonal functions.

It should be noted that the relationship between orthogonal functions and hardness of learning is not new and has been established in the context of statistical query (SQ) learning model of \cite{DBLP:conf/stoc/Kearns93}. Moreover, this relationship was characterized by \cite{DBLP:conf/stoc/BlumFJKMR94} in terms of the statistical dimension of the function class, which essentially corresponds to the largest possible set of functions in the class which are all \emph{approximately} pairwise orthogonal. The hardness of learning a class of boolean functions in an SQ model is usually proven through a lower bound on the statistical dimension of the class. 
It is noteworthy that gradient-based learning with an approximate gradient oracle can be implemented through the SQ algorithm \citep{DBLP:conf/soda/FeldmanGV17}, which means that our result on the approximate orthogonality of the considered class of functions (Lemma~\ref{lem:sum_of_squares}) immediately gives the hardness of learning this class with gradient-based methods. Nevertheless, we believe that the proof of this result directly (without resorting to the SQ proxy) deserves attention, since it allows us to establish that the low information content of the gradient is the very reason why gradient learning fails.

Theorem~1 of \citet{liu2021rigorous} states that assuming the classical hardness of the DLP, no efficient classical algorithm can learn the concept class constructed by the authors. Thus their result, although applicable to all classical learning algorithms, is conditioned by a strong assumption. In our paper, we show the \emph{unconditional} hardness of learning the parity bit of a discrete logarithm by any gradient-based method (e.g., SGD, RMSProp, Adam, etc.), i.e. we do not make any assumptions on the hardness of the DLP itself.

\subsection{Notation}
Bold-faced lowercase letters ($\mathbf{x}$) denote vectors, bold-faced uppercase letters ($\mathbf{A}$) denote matrices. Regular lowercase letters ($x$) denote scalars (or set elements), and regular uppercase letters ($X$) denote random variables (or random elements). $\|\cdot\|$ denotes the Euclidean norm: $\|\mathbf{x}\|:=\sqrt{\mathbf{x}^\top\mathbf{x}}$. For $\mathbf{x}\in\mathbb{C}^n$, conjugate transpose is denoted by $\mathbf{x}^\dag$. For any finite set $\mathcal{S}$, sampling $X$ uniformly from $\mathcal{S}$ is denoted by $X\sim\mathcal{S}$. For two functions $f, g$ on a finite set $\mathcal{S}$, let $\langle f,g\rangle:=\E_{X\sim\mathcal{S}}[f({X})\cdot g({X})]$ and $\|f\|:=\sqrt{\langle f,f\rangle}$. For a matrix $\mathbf{A}\in\mathbb{R}^{m\times n}$, its spectral norm is denoted by $\|\mathbf{A}\|$.

We use Vinogradov notation, i.e. given $f:\,\mathbb{R}\to\mathbb{R}$ and $g:\,\mathbb{R}\to\mathbb{R}_+$, we write $f\ll g$  if there exist $x_0,\alpha\in\mathbb{R}_+$ such that for all $x>x_0$ we have $|f(x)|\le\alpha g(x)$. When $f:\mathbb{R}\to\mathbb{R}_+$, we write $f\asymp g$  if $f\ll g$ and $g\ll f$. For $x>0$, we write $f=\mathrm{poly}(x)$ if there exists $k\in\mathbb{N}$ such that $f\ll x^k$. We write $f=\widetilde{O}(g)$ if $f\ll g\ln^k g$ for some $k>0$. Similarly, $f=\widetilde\Omega(g)$ means that $g\ln^k g\ll f$ for some $k>0$.

$\mathbb{Z}_p$ is the set $\{0,1,\ldots, p-1\}$, equipped with two operations, $+$ and $\times$, which work as usual addition and multiplication, except that the results are reduced modulo $p$.
 $\mathbb{Z}_p^\ast$ denotes the set of elements in $\mathbb{Z}_p$ that are relatively prime to $p$. We are mainly interested in the case when $p$ is a prime number greater than 2. In this case $\mathbb{Z}_p^\ast=\{1,\ldots,p-1\}$. By abuse of notation, we sometimes treat elements of $\mathbb{Z}_p$ (and of $\mathbb{Z}_p^\ast$) as integers in $\mathbb{Z}$. Given two positive integers $a$ and $p$, $a \bmod p$ is the remainder of the Euclidean division of $a$ by $p$, where $a$ is the dividend and $p$ is the divisor. 

\section{The Discrete Logarithm Problem}\label{sec:prelim}


Let $(\mathcal{G}, \circ)$ be a finite group,  $a\in\mathcal{G}$ an element of order $p$, and  $x\in\langle a\rangle$, where $\langle a\rangle:=\{\underbrace{a\circ a\circ\ldots\circ a}_{k\text{ times}}:\,0\le k\le p-1\}$ is the cyclic group generated by $a$. The \textbf{discrete logarithm problem (DLP)} is finding the integer $k$, $0\le k\le p-1$, such that
    $$
    \underbrace{a\circ a\circ\ldots\circ a}_{k\text{ times}}=x.
    $$
    This integer $k$ is called the \emph{discrete logarithm} of $x$ to the base $a$, and we will denote it by $\log_a x$.

It is important to understand that there are finite groups in which the DLP is not hard (computationally). As an example, consider the additive group of integers modulo prime. For example, if we take $p=11$, $(Z_{11}, +)$ is a finite cyclic group in which every non-zero element is primitive.\footnote{An element $a$ of a cyclic group $(\mathcal{G},\circ)$ is called \emph{primitive} if every element $x\in\mathcal{G}$ can be written as $x=\underbrace{a\circ a\circ \ldots \circ a}_{k\text{ times}}$ for some $k$, i.e. $a$ generates the entire group.} Here, for example, is how the element $a=2$ generates the entire group:
\begin{table}[htbp]
    \centering
    \begin{tabular}{c | c c c c c c c c c c c}
    \toprule
         $k$ & 0 & 1 & 2 & 3 & 4 & 5 & 6 & 7 & 8 & 9 & 10 \\
    \midrule
         $k\cdot a$ & 0 & 2 & 4 & 6 & 8 & 10 & 1 & 3 & 5 & 7 & 9 \\
    \bottomrule
    \end{tabular}
    \caption{Generating the group $(\mathbb{Z}_{11},+)$ from $a=2$.}
    \label{tab:group}
\end{table}

Suppose we want to solve the DLP for the element $x=3$, that is, we want to find an integer $k$ such that
$$
\underbrace{2+2+\ldots+2}_{k\text{ times}}\equiv 3\bmod 11.
$$
This can be done as follows. Even though the group operation is addition, we can express the relationship between $x$, $k$, and the discrete logarithm using multiplication:
\begin{equation}
k\cdot 2\equiv 3\bmod 11\label{eq:to_solve}
\end{equation}
To solve the equation \eqref{eq:to_solve} for $k$, we just need to find the (multiplicative) inverse for $a = 2$:
$$
    k \equiv 2^{-1}\cdot 3\bmod 11.
$$
Using, for example, the extended Euclidean algorithm, we can compute $2^{-1}\equiv6\bmod 11$ and so the value of the discrete logarithm is
$$
k\equiv 2^{-1}\cdot3\equiv 7\bmod 11.
$$
Table \ref{tab:group} confirms the correctness of the found value.

The above technique can be used for any group $(\mathbb{Z}_p, +)$ and any non-zero elements $a, x\in\mathbb{Z}_p$. Accordingly, the DLP is a computationally easy problem over $(\mathbb{Z}_p, +)$. Such groups cannot be used for cryptography. However, our subsequent analysis will show that even in them, learning just a single bit of the discrete logarithm is intractable for gradient-based methods.

One may wonder why the discrete logarithm over $(\mathbb{Z}_p,+)$ is easy, but over a general finite cyclic $(\mathcal{G},\circ)$ of order $p$---which is isomorphic to $(\mathbb{Z}_p,+)$---may be hard. The reason is that an isomorphism between $(\mathbb{Z}_p,+)$ and $(\mathcal{G},\circ)$ is established through the correspondence $k\leftrightarrow \underbrace{a\circ a\circ\ldots \circ a}_{k\text{ times}}$, where $k\in\mathbb{Z}_p$ and $a$ is an arbitrary non-identity element of $\mathcal{G}$. But it is widely believed that this isomorphism itself is a one-way function\footnote{Informally, a one-way function is a function that is easy to compute on every input, but hard to invert given the image of a random input.} for sufficiently complex groups (like elliptic ones).

We note that the best classical methods for solving the DLP in general case---Baby-step Giant-step  \citep{daniel1971class} and Pollard's rho  \citep{Pollard1975AMC}---require $O(\sqrt{p})$ computational steps, where $p$ is the group order, i.e. they are exponential in the bitlength of $p$.

\section{Main Result}

Let $(\mathcal{G},\circ)$ be a finite cyclic group of prime order $p$, and $a\in\mathcal{G}\setminus\{1\}$. Consider a function
\begin{equation}
    h_{a}(x):=(-1)^{\log_a x}=\begin{cases}
    -1\quad&\text{if }\log_a x\equiv1\bmod 2,\\
    +1&\text{if }\log_a x\equiv0\bmod 2,
    \end{cases},\quad x\in\mathcal{G}\setminus\{1\}\label{eq:DL_parity_bit}
\end{equation}
which is essentially a parity bit of the discrete logarithm $\log_a x$. Suppose we want to learn $h_a(x)$ using a gradient-based method (e.g., deep learning). For this, consider the stochastic optimization problem 
\begin{equation}
    L_{a}(\mathbf{w}):=\E_{{X}\sim\mathcal{G}\setminus\{1\}}[\ell(f_\mathbf{w}({X}),h_{a}({X}))]\to\min_{\mathbf{w}},\label{loss}
\end{equation}
where $\ell$ is a loss function, ${X}$ are the random inputs (from $\mathcal{G}\setminus\{1\}$), and $f_\mathbf{w}$ is some model parametrized by a parameter vector $\mathbf{w}$ (e.g. a neural network of a certain architecture). We  assume that $L_a(\mathbf{w})$ is differentiable with respect to $\mathbf{w}$. We are interested in studying the \emph{variance} of the gradient of $L_A$  when $A$ is drawn uniformly at random from $\mathcal{G}\setminus\{1\}$. 
The following theorem bounds this variance term.

\begin{theorem} \label{thm:main} 
Suppose that $f_\mathbf{w}(\mathbf{x})$ is differentiable w.r.t. $\mathbf{w}$, and for some scalar $d(\mathbf{w})$, satisfies  
$
\E_{X\sim\mathcal{G}\setminus\{1\}}\left[\left\|\frac{\partial}{\partial\mathbf{w}}f_\mathbf{w}({X})\right\|^2\right]\le d(\mathbf{w})^2
$. Let the loss function $\ell$ in \eqref{loss} be either the square loss $\ell(\hat{y},y)=\frac12(\hat{y}-y)^2$ or a classification loss of the form $\ell(\hat{y},y)=s(\hat{y}\cdot y)$ for some $1$-Lipschitz function $s$. Then 
\begin{equation}
\E_{A\sim\mathcal{G}\setminus\{1\}}\left\|\nabla L_{A}(\mathbf{w})-\boldsymbol{\mu}(\mathbf{w})\right\|^2\le\frac{c\cdot d(\mathbf{w})^2\ln p}{\sqrt{p}},\label{eq:main}
\end{equation}
where $\boldsymbol{\mu}(\mathbf{w}):=\E_{A\sim\mathcal{G}\setminus\{1\}}\nabla L_A(\mathbf{w})$, and $c$ is an absolute constant.
\end{theorem}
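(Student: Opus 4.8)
The plan is to follow the ``information in the gradient'' argument of \cite{DBLP:conf/icml/Shalev-ShwartzS17}, but with their exact orthogonality assumption replaced by \emph{approximate} orthogonality handled via the Boas--Bellman inequality. The first step is to isolate the $a$-dependent part of the gradient. Writing $\boldsymbol{\psi}(x):=\frac{\partial}{\partial\mathbf{w}}f_\mathbf{w}(x)$ and differentiating under the expectation in \eqref{loss}, we get $\nabla L_a(\mathbf{w})=\E_X[c_a(X)\,\boldsymbol{\psi}(X)]$ with $c_a(X)=f_\mathbf{w}(X)-h_a(X)$ for the square loss and $c_a(X)=s'(f_\mathbf{w}(X)h_a(X))\,h_a(X)$ for the classification loss. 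In both cases, using $h_a(X)\in\{-1,+1\}$, the multiplier splits as $c_a(x)=\alpha_\mathbf{w}(x)+\beta_\mathbf{w}(x)h_a(x)$ with $\alpha_\mathbf{w},\beta_\mathbf{w}$ \emph{independent of $a$} and $|\beta_\mathbf{w}(x)|\le1$: for the square loss take $\alpha_\mathbf{w}=f_\mathbf{w}$, $\beta_\mathbf{w}\equiv-1$; for the classification loss take $\beta_\mathbf{w}(x)=\tfrac12\bigl(s'(f_\mathbf{w}(x))+s'(-f_\mathbf{w}(x))\bigr)$, which is bounded by $1$ because $s$ is $1$-Lipschitz. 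Hence $\nabla L_a(\mathbf{w})=\mathbf{v}(\mathbf{w})+\E_X[\beta_\mathbf{w}(X)h_a(X)\boldsymbol{\psi}(X)]$ with $\mathbf{v}(\mathbf{w})$ not depending on $a$. For each fixed $x\neq1$, the map $a\mapsto\log_a x$ is a bijection of $\mathcal{G}\setminus\{1\}$ onto $\{1,\dots,p-1\}$, and $p-1$ is even, so $\E_{A\sim\mathcal{G}\setminus\{1\}}h_A(x)=\frac1{p-1}\sum_{m=1}^{p-1}(-1)^m=0$; therefore $\boldsymbol{\mu}(\mathbf{w})=\mathbf{v}(\mathbf{w})$ and $\nabla L_a(\mathbf{w})-\boldsymbol{\mu}(\mathbf{w})=\E_X[\beta_\mathbf{w}(X)h_a(X)\boldsymbol{\psi}(X)]$. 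Reading this coordinatewise, with $\phi_j(x):=\beta_\mathbf{w}(x)(\boldsymbol{\psi}(x))_j$, the $j$-th component equals $\langle h_a,\phi_j\rangle$, so $\|\nabla L_a(\mathbf{w})-\boldsymbol{\mu}(\mathbf{w})\|^2=\sum_j\langle h_a,\phi_j\rangle^2$ while $\sum_j\|\phi_j\|^2=\E_X[\beta_\mathbf{w}(X)^2\|\boldsymbol{\psi}(X)\|^2]\le d(\mathbf{w})^2$.

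Averaging over $A$ and swapping the (finite) sums gives $\E_{A}\|\nabla L_A(\mathbf{w})-\boldsymbol{\mu}(\mathbf{w})\|^2=\frac1{p-1}\sum_j\sum_{a}\langle\phi_j,h_a\rangle^2$. Now I apply the Boas--Bellman inequality in $L^2(\mathcal{G}\setminus\{1\})$ with the vector $\phi_j$ against the family $\{h_a\}_{a\in\mathcal{G}\setminus\{1\}}$:
\[
\sum_{a}\langle\phi_j,h_a\rangle^2\;\le\;\|\phi_j\|^2\Bigl(\max_a\|h_a\|^2+\bigl(\textstyle\sum_{a\neq b}\langle h_a,h_b\rangle^2\bigr)^{1/2}\Bigr).
\]
Since $\|h_a\|^2=\E_X[h_a(X)^2]=1$, summing over $j$ yields
\[
\E_{A}\|\nabla L_A(\mathbf{w})-\boldsymbol{\mu}(\mathbf{w})\|^2\;\le\;\frac{d(\mathbf{w})^2}{p-1}\Bigl(1+\bigl(\textstyle\sum_{a\neq b}\langle h_a,h_b\rangle^2\bigr)^{1/2}\Bigr),
\]
so everything reduces to the approximate-orthogonality bound $\sum_{a\neq b}\langle h_a,h_b\rangle^2\ll p\ln^2 p$, i.e.\ Lemma~\ref{lem:sum_of_squares}.

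That estimate is the genuinely hard part and where I would spend most of the effort. To prove it I would fix a generator $g$, index elements and bases by their logs $i=\log_g x$ and $j=\log_g a$ in $\{1,\dots,p-1\}$ so that $h_a(x)=(-1)^{(ij^{-1})\bmod p}$, and form the $\pm1$ matrix $\mathbf{H}=[h_a(x)]_{x,a}$. Then $(p-1)\langle h_a,h_b\rangle$ are the entries of $\mathbf{H}^\top\mathbf{H}$, and since $\mathrm{tr}(\mathbf{H}^\top\mathbf{H})=\|\mathbf{H}\|_F^2=(p-1)^2$ we get $\sum_{a,b}\langle h_a,h_b\rangle^2=(p-1)^{-2}\|\mathbf{H}^\top\mathbf{H}\|_F^2\le(p-1)^{-2}\|\mathbf{H}\|^2\|\mathbf{H}\|_F^2=\|\mathbf{H}\|^2$, reducing the task to bounding the spectral norm of $\mathbf{H}$. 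For that I would expand the ``parity of the residue'' function $m\mapsto(-1)^{m\bmod p}$ (which is $p$-periodic) into additive characters: with $e_p(t):=e^{2\pi i t/p}$ one has $(-1)^{m}=\sum_{\ell=0}^{p-1}\widehat c_\ell\,e_p(\ell m)$ where $\widehat c_\ell=\frac{2}{p(1+e_p(-\ell))}$, so $|\widehat c_\ell|=\frac1{p|\cos(\pi\ell/p)|}$ and $\sum_\ell|\widehat c_\ell|\asymp\frac1p\sum_\ell|\cos(\pi\ell/p)|^{-1}\ll\ln p$. This writes a column permutation of $\mathbf{H}$ as $\sum_\ell\widehat c_\ell\,\mathbf{E}_\ell$ with $(\mathbf{E}_\ell)_{i,k}=e_p(\ell ik)$; for $\ell\neq0$, rescaling the column index by $\ell$ shows $\mathbf{E}_\ell$ is, up to permutation, a principal minor of the $p\times p$ DFT matrix, hence $\|\mathbf{E}_\ell\|\le\sqrt p$, while the $\ell=0$ term contributes $|\widehat c_0|\cdot(p-1)<1$. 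By the triangle inequality $\|\mathbf{H}\|\ll\sqrt p\ln p$, so $\sum_{a\neq b}\langle h_a,h_b\rangle^2\le\|\mathbf{H}\|^2\ll p\ln^2 p$. Substituting into the display above gives $\E_{A}\|\nabla L_A(\mathbf{w})-\boldsymbol{\mu}(\mathbf{w})\|^2\ll\frac{d(\mathbf{w})^2}{p-1}\sqrt p\ln p\asymp\frac{d(\mathbf{w})^2\ln p}{\sqrt p}$, which is \eqref{eq:main}. The two points that need care are the uniform treatment of the two loss families (the $\alpha_\mathbf{w}+\beta_\mathbf{w}h_a$ splitting with $|\beta_\mathbf{w}|\le1$), and, above all, the character-sum control of $\|\mathbf{H}\|$ underlying Lemma~\ref{lem:sum_of_squares}.
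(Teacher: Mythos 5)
Your proposal is correct and shares the paper's overall skeleton: isolate the $a$-dependent part of the gradient, note $\E_A h_A(x)=0$ pointwise so that $\boldsymbol{\mu}(\mathbf{w})$ is the $a$-independent part, apply Boas--Bellman coordinatewise, and reduce everything to the approximate-orthogonality bound $\sum_{a\ne b}\langle h_a,h_b\rangle^2\ll p\ln^2 p$, which you prove by the same Fourier device as the paper (expanding $m\mapsto(-1)^{m\bmod p}$ in additive characters with coefficients of total mass $\ll\ln p$ and bounding each character matrix by $\sqrt p$ as a submatrix of a scaled unitary DFT). Where you genuinely deviate is in how the key lemma is packaged: the paper first reduces $\langle h_A,h_B\rangle$ to the random variable $f(Y)=\E_X[(-1)^X(-1)^{YX\bmod p}]$ via the uniformity of $\log_B A$ (Lemma~\ref{lem:unif}), bounds $\Var[f(Y)]$ by $\sigma_1(\mathbf{\Phi})^2/(p-1)^2$ (Lemma~\ref{spectr}), and needs a Hadamard-product SVD argument (Lemma~\ref{phi_prime}) to strip the $(-1)^X$ factor and pass to $\mathbf{\Phi'}$; you instead index rows and columns by discrete logs to a fixed generator, observe that your $\mathbf{H}$ is then just a column permutation of $\mathbf{\Phi'}$, and bound $\sum_{a,b}\langle h_a,h_b\rangle^2=(p-1)^{-2}\|\mathbf{H}^\top\mathbf{H}\|_F^2\le\|\mathbf{H}\|^2$ via $\|\mathbf{H}^\top\mathbf{H}\|_F\le\|\mathbf{H}\|\,\|\mathbf{H}\|_F$, which bypasses Lemmas~\ref{lem:unif}, \ref{spectr} and \ref{phi_prime} entirely and is a mild streamlining. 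You also handle the classification loss directly through the split $c_a=\alpha_\mathbf{w}+\beta_\mathbf{w}h_a$ with $|\beta_\mathbf{w}|\le1$, giving a uniform treatment of both losses, whereas the paper defers that case to the reduction in \citet{DBLP:conf/icml/Shalev-ShwartzS17} (both treatments share the minor technicality that a $1$-Lipschitz $s$ need not be differentiable everywhere). The character-sum estimate $\sum_\ell|\widehat c_\ell|\ll\ln p$ is only sketched in your proposal, but it is exactly the computation of Lemma~\ref{final}, so nothing of substance is missing; both routes land on the same $\frac{d(\mathbf{w})^2\ln p}{\sqrt p}$ bound.
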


\begin{remark}\label{rem:sgd_fails}
Theorem~\ref{thm:main}  says that the gradient of $L_a(\mathbf{w})$ at any point $\mathbf{w}$ is extremely concentrated around a fixed point \textbf{independent} of the base $a$.\footnote{Using our result and Chebyshev’s inequality, one can show that the gradient deviates in 2-norm from a fixed point $\boldsymbol{\mu}(\mathbf{w})$ by more than $\widetilde{\Omega}(2^{-n/6})$ with probability at most $\widetilde{O}(2^{-n/6})$, where $n$ is the bit length of $p$ (group order).} Using this one can show \cite[Theorem 10]{DBLP:journals/jmlr/Shamir18} that a gradient-based method will likely fail in returning a reasonable predictor of the discrete logarithm's parity bit unless the number of iterations is exponentially large in the bitlength $n$ of $p$. This provides strong evidence that gradient-based methods cannot learn even a single bit of the discrete logarithm in $\mathrm{poly}(n)$ time. The result holds \textbf{regardless of which class of predictors we use} (e.g. arbitrarily complex neural networks) --- the problem lies in using gradient-based method to train them.    
\end{remark}

\paragraph{Proof Idea.} Our result is an extension of the work of \cite{DBLP:conf/icml/Shalev-ShwartzS17} which shows that the gradient is not  informative when learning a class of orthogonal functions (see their Theorem~1). In their proof, they rely on the Bessel inequality, which is valid for an orthonormal sequence in the inner-product space. Unfortunately, their result cannot be applied to the parity bit of the discrete logarithm, because the functions  $\{h_a(x)\mid a\in\mathcal{G}\setminus\{1\}\}$, where $h_a(x)$ is defined by \eqref{eq:DL_parity_bit}, are not orthogonal, i.e. $\langle h_a, h_b\rangle:=\E_{X\sim\mathcal{G}\setminus\{1\}}\left[h_a(X)\cdot h_b(X)\right]\ne0$ for some $a,b\in\mathcal{G}\setminus\{1\}$, $a\ne b$. However, we can show that, on average over $a$ and $b$, the inner product $\langle h_a, h_b\rangle$ is small (Lemma~\ref{lem:sum_of_squares}). More precisely, it satisfies
\begin{equation}
\E_{\substack{A,B\sim\mathcal{G}\setminus\{1\}\\A\ne B}}\left[\langle h_A, h_B\rangle^2\right]\ll\frac{\ln^2 p}{p}.\label{eq:inner_prod_small}
\end{equation}
Further, using the Boas-Bellman inequality (Lemma~\ref{lem:boas_bellman}) instead of the Bessel inequality in the proof of \cite{DBLP:conf/icml/Shalev-ShwartzS17}, we can get the bound \eqref{eq:main}.

We note that in order to prove \eqref{eq:inner_prod_small}, we have established some intermediate results that may be of independent interest. Namely, we have shown that for the matrix 
$$
\mathbf{\Phi'}:=\{(-1)^{jk\bmod p}\}_{j,k\in\mathbb{Z}_p^\ast},
$$
the spectral norm $\|\mathbf{\Phi'}\|\ll \sqrt{p}\ln p$ (Lemma~\ref{final}). Using this fact, we proved that for $Y\sim\mathbb{Z}_p^\ast$ a random variable 
\begin{equation}
f(Y)=\E_{X\sim\mathbb{Z}_p^\ast}\left[(-1)^{X}\cdot(-1)^{YX\bmod p}\right],\label{eq:f}
\end{equation}
is concentrated around its mean $\E[f(Y)]=0$ with a variance $\Var[f(Y)]\ll \frac{\ln^2 p}{p}$. And this, in turn, implies \eqref{eq:inner_prod_small}, as is shown in Lemma~\ref{lem:sum_of_squares}.

The proof of Theorem~\ref{thm:main} is given in Section~\ref{sec:proofs}.

\section{Empirical Verification}\label{sec:experim}
Our code is available at \url{https://github.com/armanbolatov/hardness_of_learning}.

\paragraph{Concentration of the Gradient.} As mentioned earlier, Theorem~\ref{thm:main} is true for any finite cyclic group of prime order, including the additive group $(\mathbb{Z}_p, +)$. Let us verify empirically the statement of the theorem for this group. Let $f_\mathbf{w}(x)$ be a neural network\footnote{3-layer dense neural network with 1000 neurons on each hidden layer, sigmoid activation, binary cross-entropy loss.} that we may want to train to learn the mapping\footnote{We remind the reader that for prime $p$, in $(\mathbb{Z}_p,+)$ we have $\log_a x\equiv a^{-1}x\bmod p$, where $a^{-1}$ is the multiplicative inverse of $a$ (see Section~\ref{sec:dlp}).} $x \mapsto (-1)^{\log_a x}$, $x\in\mathbb{Z}_p^\ast$, 
where $\mathbf{w}\in\mathbb{R}^d$ are all parameters of the neural network. Let $\nabla L_a(\mathbf{w})$ be the gradient of the binary cross-entropy loss function at $\mathbf{w}$. We sample $\mathbf{w}_1,\ldots,\mathbf{w}_{20}$ from $\mathbb{R}^d$ using the default PyTorch initializer\footnote{For each dense layer of shape $d_\text{in}\times d_\text{out}$, PyTorch initializes its parameters uniformly at random from the interval $\left[-1/{\sqrt{d_\text{in}}},1/{\sqrt{d_\text{in}}}\right]$, where $d_\text{in}$ is the size of the input, and $d_\text{out}$ is the size of the output.}, and for each $\mathbf{w}_i$, we compute 
\begin{align}
    v(\mathbf{w}_i)&=\E_{A\sim\mathbb{Z}_p^\ast}\left\|\nabla L_A(\mathbf{w}_i)-\E_{A'\sim\mathbb{Z}_p^\ast}\nabla L_{A'}(\mathbf{w}_i)\right\|^2,\\
    g(\mathbf{w}_i)&=\E_{X\sim\mathbb{Z}_p^\ast}\left\|\frac{\partial}{\partial\mathbf{w}}f_\mathbf{w}({X})\right\|^2.\label{eq:g_w}
\end{align}
According to Theorem~\ref{thm:main}, the values $\frac{v(\mathbf{w}_i)}{g(\mathbf{w}_i)}$ should be of order $\widetilde{O}\left(\frac{1}{\sqrt{p}}\right)$. Thus we plot $\E_{i\sim\{1,\ldots,20\}}\left[\frac{v(\mathbf{w}_i)}{g(\mathbf{w}_i)}\cdot\sqrt{p}\right]$ against $p$ in Figure~\ref{fig:thm1_verif}.
\begin{figure}
\begin{minipage}[t]{.48\textwidth}
    \centering
    \includegraphics[width=.9\textwidth]{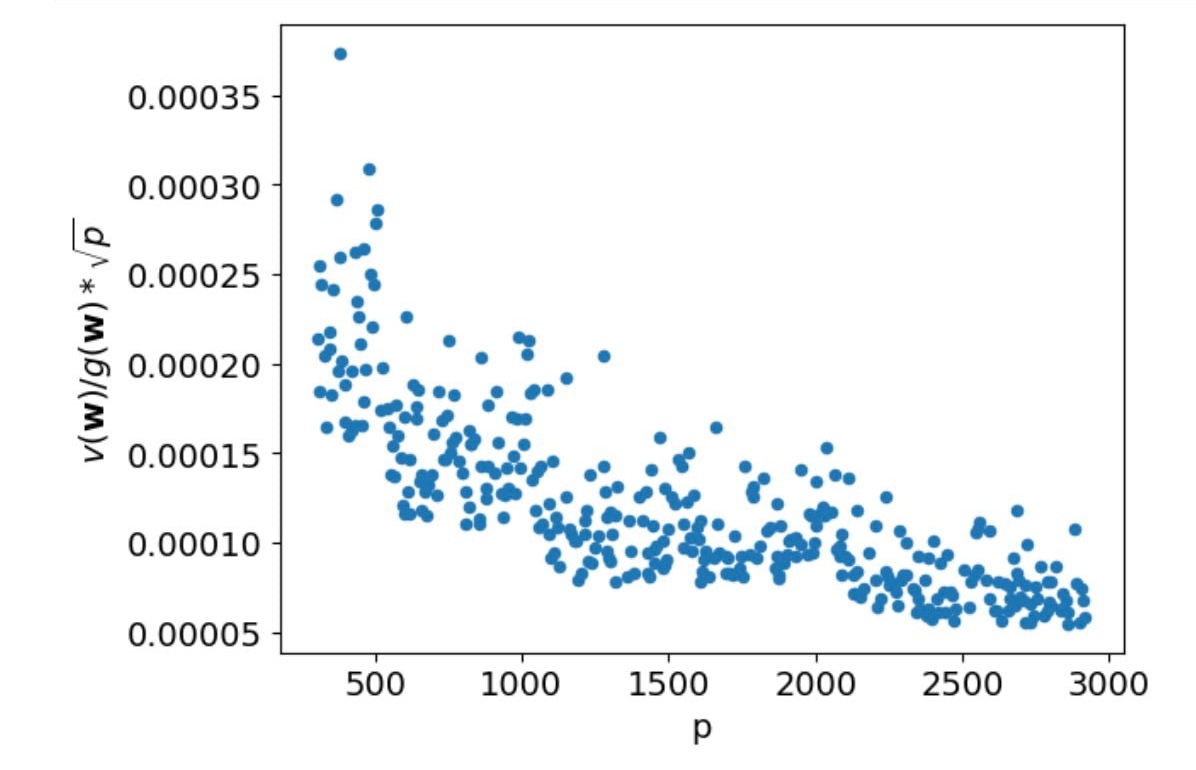}
    \caption{Verifying the statement of Theorem~\ref{thm:main}. For prime numbers $p$ in $[300,3000]$, we plot the left-hand side of \eqref{eq:main} divided by the average squared norm of the neural network's gradient \eqref{eq:g_w} and multiplied by $\sqrt{p}$. The resulting curve is of order $\widetilde{O}(1)$. Moreover, it even decreases.}
    \label{fig:thm1_verif}    
\end{minipage}\hfill\begin{minipage}[t]{.48\textwidth}
    \centering
    \includegraphics[width=.9\textwidth]{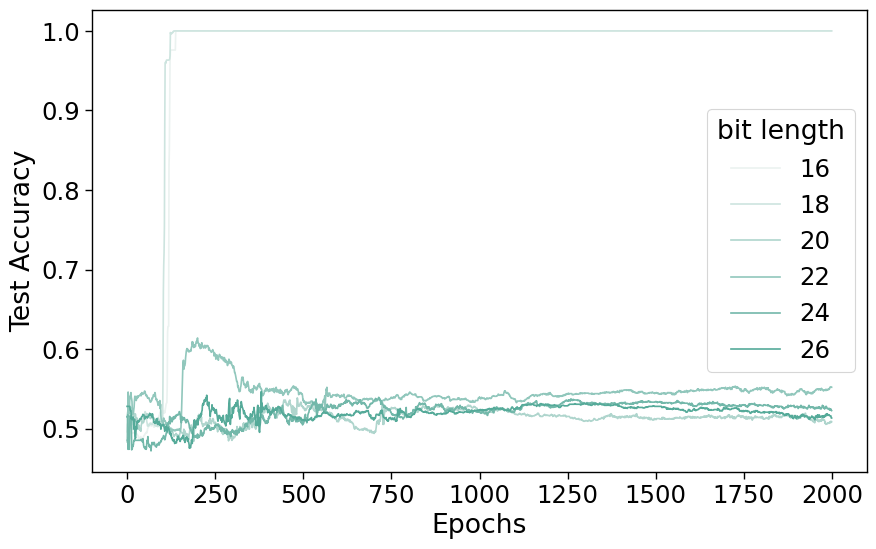}
    \caption{Learning the parity bit of the discrete logarithm in $(\mathbb{Z}_p,+)$ with a 3-layer width-1000 dense network. Darker shades correspond to longer bitlengths. For each bitlength $n$, the group order $p$ is chosen randomly from the prime numbers in the interval $[2^{n-1}, 2^n-1]$.}
    \label{fig:sgd_failure}    
\end{minipage}
\end{figure}
As we can see, this expression is bounded as $p$ grows, which confirms the statement of the theorem. In fact, it is not only bounded but actually decreases, suggesting that our upperbound \eqref{eq:main} can be improved.

\paragraph{Failure of Gradient-Based Learning.} According to Remark~\ref{rem:sgd_fails}, any  gradient-based method most likely will fail to learn the parity bit of a discrete logarithm. To test this claim, we generated a labeled sample 
$$
(x_1, (-1)^{\log_a x_1}), \ldots, (x_m, (-1)^{\log_a x_m})
$$
where $x_1, \ldots, x_m$, and $a$ are taken randomly from $\mathbb{Z}_p^\ast$. Using this sample, we trained a dense 3-layer neural network with 1000 neurons in each hidden layer. We used Adam with a learning rate of 0.001 (default), $m=5000$, a 70/30 split between training and test sets, batch size 100, and we trained for 2000 epochs. The results for different bitlengths $n$ are shown in Figure~\ref{fig:sgd_failure}. The group order $p$ for each bitlength $n$ was taken randomly from the prime numbers in the interval $[2^{n-1}, 2^n-1]$. We can see that as the bitlength increases, the chances of successful learning decrease, as predicted by our theory.

\section{Proofs}\label{sec:proofs}
\subsection{Some Statistical Properties of $\mathbb{Z}_p^\ast$}

The goal of this subsection is to study the distribution of the random variable $f(Y)$, defined by \eqref{eq:f}, where $Y$ is sampled uniformly at random from ${\mathbb Z}_p^\ast$. Our key result is the following theorem that will be crucial for our further analysis of the discrete logarithm's parity bit in Section~\ref{sec:dlp}.
\begin{theorem}\label{major} Let $f(Y)$ be a random variable defined by \eqref{eq:f}, where $Y\sim\mathbb{Z}_p^\ast$. Then
$$
    {\E}[f(Y)]=0,\qquad {\Var}[f(Y)]\leq \frac{c\ln^2 p}{p},
$$
where $c$ is some universal constant.
\end{theorem}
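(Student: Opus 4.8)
The identity $\E[f(Y)]=0$ is elementary. Writing out both expectations (using the definition \eqref{eq:f} of $f$) and exchanging the order of summation,
\begin{equation*}
\E[f(Y)]=\frac{1}{(p-1)^2}\sum_{x\in\mathbb{Z}_p^\ast}(-1)^x\sum_{y\in\mathbb{Z}_p^\ast}(-1)^{xy\bmod p}.
\end{equation*}
For each fixed $x\in\mathbb{Z}_p^\ast$ the map $y\mapsto xy\bmod p$ is a bijection of $\mathbb{Z}_p^\ast$ (multiplication by a unit), so the inner sum equals $\sum_{z=1}^{p-1}(-1)^z=0$, because $p-1$ is even and $\{1,\dots,p-1\}$ contains equally many odd and even residues. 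Hence $\E[f(Y)]=0$, and consequently $\Var[f(Y)]=\E[f(Y)^2]$.

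For the variance the plan is to read $f$ off as a single matrix--vector product with $\mathbf{\Phi'}$. Let $\mathbf{u}:=\big((-1)^x\big)_{x\in\mathbb{Z}_p^\ast}\in\mathbb{R}^{p-1}$ be the sign vector. Then $f(y)=\frac{1}{p-1}(\mathbf{\Phi'}\mathbf{u})_y$ for every $y\in\mathbb{Z}_p^\ast$, so the vector $\mathbf{f}:=\big(f(y)\big)_{y\in\mathbb{Z}_p^\ast}$ equals $\frac{1}{p-1}\mathbf{\Phi'}\mathbf{u}$. Since $Y$ is uniform on a set of size $p-1$,
\begin{equation*}
\Var[f(Y)]=\E[f(Y)^2]=\frac{1}{p-1}\|\mathbf{f}\|^2=\frac{1}{(p-1)^3}\|\mathbf{\Phi'}\mathbf{u}\|^2\le\frac{\|\mathbf{\Phi'}\|^2\,\|\mathbf{u}\|^2}{(p-1)^3}.
\end{equation*}

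It then remains only to substitute $\|\mathbf{u}\|^2=p-1$ and the spectral-norm bound $\|\mathbf{\Phi'}\|\ll\sqrt{p}\ln p$ supplied by Lemma~\ref{final}, which gives
\begin{equation*}
\Var[f(Y)]\le\frac{\|\mathbf{\Phi'}\|^2}{(p-1)^2}\ll\frac{p\ln^2 p}{(p-1)^2}\asymp\frac{\ln^2 p}{p};
\end{equation*}
enlarging the constant to absorb the finitely many small primes yields $\Var[f(Y)]\le c\ln^2 p/p$, as claimed.

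So the only genuinely hard ingredient is Lemma~\ref{final} itself---the estimate $\|\mathbf{\Phi'}\|\ll\sqrt{p}\ln p$ for the $(p-1)\times(p-1)$ sign matrix $\{(-1)^{jk\bmod p}\}_{j,k\in\mathbb{Z}_p^\ast}$. I expect that to be where the real work lies (it should require genuine number theory---bounds on exponential/Gauss-type sums, or an explicit analysis of the multiplicative structure of $\mathbb{Z}_p^\ast$), whereas the two displays above are just the exchange-of-summation argument for the mean together with a one-line Cauchy--Schwarz/operator-norm chain for the variance. Accordingly I would present the mean computation and the matrix--vector reduction first, and then defer to Lemma~\ref{final} for the crucial bound.
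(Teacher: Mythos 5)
Your proposal is correct, and it reaches the bound through the same crux as the paper---the spectral estimate $\|\mathbf{\Phi'}\|\ll\sqrt{p}\ln p$ of Lemma~\ref{final}---but your reduction to that estimate is genuinely leaner. The paper works with the matrix $\mathbf{\Phi}=[(-1)^x(-1)^{xy\bmod p}]$, bounds $\Var[f(Y)]$ by $\sigma_1(\mathbf{\Phi})^2/(p-1)^2$ via a quadratic form in the normalized all-ones vector (Lemma~\ref{spectr}), and then needs a separate Hadamard-product/SVD argument (Lemma~\ref{phi_prime}) to transfer the singular values of $\mathbf{\Phi}$ to those of $\mathbf{\Phi'}$. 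You instead fold the sign factor $(-1)^x$ into the test vector $\mathbf{u}$, write $\mathbf{f}=\frac{1}{p-1}\mathbf{\Phi'}\mathbf{u}$, and finish with the one-line operator-norm bound $\|\mathbf{\Phi'}\mathbf{u}\|\le\|\mathbf{\Phi'}\|\,\|\mathbf{u}\|$, $\|\mathbf{u}\|^2=p-1$; since $\mathbf{\Phi'}$ is symmetric (and the norm is transpose-invariant in any case), the indexing issue is immaterial, and the resulting bound $\Var[f(Y)]\le\|\mathbf{\Phi'}\|^2/(p-1)^2$ is exactly the paper's. Your mean computation coincides with Lemma~\ref{lem:mean}. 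What the paper's longer route buys is a slightly stronger standalone fact---equality of \emph{all} singular values of $\mathbf{\Phi}$ and $\mathbf{\Phi'}$---which is not needed for the theorem; what yours buys is brevity, dispensing with Lemmas~\ref{spectr} and~\ref{phi_prime} entirely. One small remark: the hard ingredient, Lemma~\ref{final}, is proved in the paper not by deep exponential-sum machinery but by expanding the sign vector in the DFT basis and bounding $\sum_\ell\frac{2}{p}|1+\omega^\ell|^{-1}\ll\ln p$, so your deferral to it is well placed but its proof is more elementary than you anticipate.
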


Let us denote $\phi(X,Y)=(-1)^{X}\cdot(-1)^{YX \bmod p}$ and assume that $X$ is also sampled uniformly from ${\mathbb Z}_p^\ast$. Then, $f(Y)={\E}_{X}[\phi(X,Y)]$. Let us first study the distribution of the random variable $\phi(X,Y)$. Our significant finding is the following lemma.
\begin{lemma}\label{lem:mean} ${\E}_{Y}[\phi(x,Y)]=0$ for any $x\in {\mathbb Z}_p^\ast$.
\end{lemma}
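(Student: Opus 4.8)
The plan is to compute $\E_Y[\phi(x,Y)]$ directly from its definition. Fix $x\in\mathbb{Z}_p^\ast$. Since $\phi(x,Y)=(-1)^x\cdot(-1)^{Yx\bmod p}$ and the factor $(-1)^x$ does not depend on $Y$, we may pull it out: $\E_Y[\phi(x,Y)]=(-1)^x\cdot\E_{Y\sim\mathbb{Z}_p^\ast}\left[(-1)^{Yx\bmod p}\right]$. So it suffices to show $\E_{Y\sim\mathbb{Z}_p^\ast}\left[(-1)^{Yx\bmod p}\right]=0$.

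The key observation is that since $p$ is prime and $x\in\mathbb{Z}_p^\ast$, the map $Y\mapsto Yx\bmod p$ is a bijection of $\mathbb{Z}_p^\ast$ onto itself (its inverse is multiplication by $x^{-1}\bmod p$). Hence, as $Y$ ranges uniformly over $\mathbb{Z}_p^\ast=\{1,2,\ldots,p-1\}$, the value $Yx\bmod p$ also ranges uniformly over $\{1,2,\ldots,p-1\}$. Therefore
$$
\E_{Y\sim\mathbb{Z}_p^\ast}\left[(-1)^{Yx\bmod p}\right]=\frac{1}{p-1}\sum_{k=1}^{p-1}(-1)^k.
$$
Now $\sum_{k=1}^{p-1}(-1)^k=0$, because $p$ is an odd prime (greater than $2$), so $p-1$ is even and the sum $(-1)^1+(-1)^2+\cdots+(-1)^{p-1}$ pairs up into $(p-1)/2$ consecutive cancelling pairs $(-1)+1$. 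This gives $\E_Y[\phi(x,Y)]=(-1)^x\cdot 0=0$, as claimed.

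There is essentially no obstacle here — the lemma is a clean consequence of two elementary facts: multiplication by a unit permutes $\mathbb{Z}_p^\ast$, and the alternating sum of $(-1)^k$ over an even-length run of consecutive integers vanishes. The only point worth stating carefully is the use of the hypothesis that $p$ is an odd prime, which guarantees both that $\mathbb{Z}_p^\ast$ has the explicit form $\{1,\ldots,p-1\}$ (so the bijection argument applies and every nonzero residue is hit) and that $|\mathbb{Z}_p^\ast|=p-1$ is even (so the alternating sum cancels exactly). I would present it in roughly three lines: extract the $Y$-independent sign, invoke the permutation, evaluate the resulting geometric/alternating sum.
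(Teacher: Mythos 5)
Your proof is correct and follows essentially the same route as the paper's: factor out $(-1)^x$, use the fact that multiplication by the unit $x$ permutes $\mathbb{Z}_p^\ast$, and observe that the resulting alternating sum over the even number $p-1$ of terms vanishes. Your writeup is a bit more explicit about where oddness of $p$ enters, but the argument is the same.
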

\begin{proof}
Direct computation gives
\begin{equation*}
{\E}_{Y}[\phi(x,Y)]=(-1)^{x}\frac{1}{p-1}\sum_{y\in {\mathbb Z}_p^\ast} (-1)^{yx\bmod p}=(-1)^{x}\frac{1}{p-1}\sum_{y'\in {\mathbb Z}_p^\ast} (-1)^{y'}=0,
\end{equation*}
due to the fact that $\{yx\bmod p\mid y\in {\mathbb Z}_p^\ast\}={\mathbb Z}_p^\ast$ and $|{\mathbb Z}_p^\ast|$ is even.
\end{proof}

From Lemma~\ref{lem:mean} we conclude that 
\begin{equation}
{\E}_{Y}[f(Y)] = {\E}_{X,Y}[\phi(X,Y)]=0.\label{eq:mean_f}
\end{equation}
Thus, $f(Y)$ is distributed around its mean $0$ and the first statement of Theorem~\ref{major} is proved. Let us now study its variance. 
The following lemma shows that the spectral norm of the matrix $\mathbf{\Phi}=[\phi(x,y)]_{(x,y)\in ({\mathbb Z}_p^\ast)^2}\in {\mathbb R}^{(p-1)\times (p-1)}$ bounds the variance.
\begin{lemma}\label{spectr} ${\Var}[f(Y)]\leq \frac{\sigma_1(\mathbf{\Phi})^2}{(p-1)^2}$.
\end{lemma}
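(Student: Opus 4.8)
The plan is to recognize $\Var[f(Y)]$ as a rescaled squared Euclidean norm of a single vector obtained by applying $\mathbf{\Phi}^\top$ to the all-ones vector, and then to bound that norm by the operator (spectral) norm of $\mathbf{\Phi}$. The first step is free: since $\E[f(Y)]=0$ by \eqref{eq:mean_f}, the variance is just the second moment, so $\Var[f(Y)]=\E_Y[f(Y)^2]=\frac{1}{p-1}\sum_{y\in\mathbb{Z}_p^\ast}f(y)^2$.

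Next I would pass to matrix notation. By definition $f(y)=\E_X[\phi(X,y)]=\frac{1}{p-1}\sum_{x\in\mathbb{Z}_p^\ast}\phi(x,y)$, so if we index the rows of $\mathbf{\Phi}$ by $x$ and the columns by $y$, the vector $\mathbf{v}:=(f(y))_{y\in\mathbb{Z}_p^\ast}\in\mathbb{R}^{p-1}$ is exactly $\mathbf{v}=\frac{1}{p-1}\,\mathbf{\Phi}^\top\mathbf{1}$, where $\mathbf{1}\in\mathbb{R}^{p-1}$ is the all-ones vector. Hence $\sum_{y}f(y)^2=\|\mathbf{v}\|^2=\frac{1}{(p-1)^2}\,\|\mathbf{\Phi}^\top\mathbf{1}\|^2$.

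Then I apply the defining inequality of the spectral norm together with the elementary fact that a matrix and its transpose share the same singular values, so $\|\mathbf{\Phi}^\top\mathbf{1}\|\le\|\mathbf{\Phi}^\top\|\cdot\|\mathbf{1}\|=\sigma_1(\mathbf{\Phi})\sqrt{p-1}$. Combining the three displays yields
$$
\Var[f(Y)]=\frac{1}{p-1}\cdot\frac{1}{(p-1)^2}\,\|\mathbf{\Phi}^\top\mathbf{1}\|^2\le\frac{1}{(p-1)^3}\,\sigma_1(\mathbf{\Phi})^2(p-1)=\frac{\sigma_1(\mathbf{\Phi})^2}{(p-1)^2},
$$
which is the claim.

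There is no substantive obstacle here; the only points requiring care are keeping track of the two factors of $\frac{1}{p-1}$ (one from the expectation defining $f$, one from the outer expectation over $Y$) and noting that $\mathbf{\Phi}$ need not be symmetric, so it is $\mathbf{\Phi}^\top\mathbf{1}$ rather than $\mathbf{\Phi}\mathbf{1}$ that appears, which is harmless since $\|\mathbf{\Phi}^\top\|=\|\mathbf{\Phi}\|=\sigma_1(\mathbf{\Phi})$. All the real difficulty is postponed to the separate task of bounding $\sigma_1(\mathbf{\Phi})$ itself.
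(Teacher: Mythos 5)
Your proof is correct and is essentially the paper's argument in vector form: the paper bounds the same quantity as the quadratic form $\boldsymbol{\xi}^\top\mathbf{\Phi}\mathbf{\Phi}^\top\boldsymbol{\xi}$ with $\boldsymbol{\xi}=\tfrac{1}{\sqrt{p-1}}\mathbf{1}$ via $\lambda_1(\mathbf{\Phi}\mathbf{\Phi}^\top)=\sigma_1(\mathbf{\Phi})^2$, which is exactly your bound $\|\mathbf{\Phi}^\top\mathbf{1}\|^2\le\sigma_1(\mathbf{\Phi})^2\,(p-1)$. The bookkeeping of the two $\tfrac{1}{p-1}$ factors matches, so nothing is missing.
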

\begin{proof}
Formula \eqref{eq:mean_f} implies that the variance is equal to the second moment, therefore 
\begin{equation*}
\begin{split}
{\Var}[f(Y)]&= {\mathbb E}[f(Y)^2] = \frac{1}{p-1}\sum_{y\in {\mathbb Z}_p^\ast}\frac{1}{(p-1)^2}\left(\sum_{x\in {\mathbb Z}_p^\ast}\phi(x,y)\right)^2 \\
&=\frac{1}{(p-1)^3}\sum_{(y,x,x')\in ({\mathbb Z}_p^\ast)^3}\phi(x,y)\phi(x',y) = \frac{1}{(p-1)^3}\sum_{(x,x')\in ({\mathbb Z}_p^\ast)^2} (\mathbf\Phi\mathbf\Phi^\top)_{x,x'} \\
&=\frac{1}{(p-1)^2}\sum_{(x,x')\in ({\mathbb Z}_p^\ast)^2} (\mathbf{\Phi\Phi}^\top)_{x,x'}\xi_x\xi_{x'},
\end{split}
\end{equation*}
where $\boldsymbol{\xi}\in {\mathbb R}^{p-1}$ satisfies $\xi_i=\frac{1}{\sqrt{p-1}}, i\in {\mathbb Z}_p^\ast$. The latter quadratic form is bounded by the largest eigenvalue of the symmetric matrix $\mathbf{\Phi\Phi}^\top$, i.e. 
\begin{equation*}
{\rm Var}[f(Y)]= {\E}[f(Y)^2] \leq 
\frac{1}{(p-1)^2}\lambda_1(\mathbf{\Phi\Phi}^\top) = \frac{1}{(p-1)^2}\sigma_1(\mathbf{\Phi})^2. 
\end{equation*}
\end{proof}

Our next goal will be to study the spectral norm of $\mathbf\Phi$. The following lemma simplifies our task.
\begin{lemma}\label{phi_prime} Let $\mathbf{\Phi'}=[(-1)^{yx\bmod p}]_{(x,y)\in ({\mathbb Z}_p^\ast)^2}$. Then we have  $
\sigma_\ell(\mathbf{\Phi})=\sigma_\ell(\mathbf{\Phi'})$, $\ell\in\mathbb{Z}_p^\ast$.
\end{lemma}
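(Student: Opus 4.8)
The plan is to exhibit $\mathbf{\Phi}$ as $\mathbf{\Phi}=\mathbf{D}\mathbf{\Phi'}$ for a suitable signature matrix $\mathbf{D}$ and then invoke the invariance of singular values under multiplication by an orthogonal matrix. Concretely, recall that $\phi(x,y)=(-1)^{x}\cdot(-1)^{yx\bmod p}$, so the $x$-th row of $\mathbf{\Phi}$ is obtained from the $x$-th row of $\mathbf{\Phi'}$ by multiplying every entry by the scalar $(-1)^{x}$, which depends only on the row index $x$ and not on $y$. Hence, setting $\mathbf{D}:=\mathrm{diag}\bigl(((-1)^{x})_{x\in\mathbb{Z}_p^\ast}\bigr)\in\mathbb{R}^{(p-1)\times(p-1)}$, we have the identity $\mathbf{\Phi}=\mathbf{D}\mathbf{\Phi'}$.

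Next I would note that $\mathbf{D}$ is symmetric and satisfies $\mathbf{D}^2=\mathbf{I}$, hence it is orthogonal ($\mathbf{D}^\top\mathbf{D}=\mathbf{I}$ and $\mathbf{D}^{-1}=\mathbf{D}$). Therefore
$$
\mathbf{\Phi}\mathbf{\Phi}^\top=\mathbf{D}\mathbf{\Phi'}(\mathbf{D}\mathbf{\Phi'})^\top=\mathbf{D}\bigl(\mathbf{\Phi'}\mathbf{\Phi'}^\top\bigr)\mathbf{D}^{-1},
$$
so $\mathbf{\Phi}\mathbf{\Phi}^\top$ and $\mathbf{\Phi'}\mathbf{\Phi'}^\top$ are similar matrices and thus share the same eigenvalues, i.e. $\lambda_\ell(\mathbf{\Phi}\mathbf{\Phi}^\top)=\lambda_\ell(\mathbf{\Phi'}\mathbf{\Phi'}^\top)$ for every $\ell$. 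Taking square roots gives $\sigma_\ell(\mathbf{\Phi})=\sigma_\ell(\mathbf{\Phi'})$ for all $\ell\in\mathbb{Z}_p^\ast$, which is the claim. (Equivalently, one can cite the standard fact that left-multiplication by an orthogonal matrix preserves the whole singular-value spectrum, since it amounts to composing the SVD with an orthogonal change of the left singular basis.)

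There is essentially no analytic obstacle here; the only point that requires a moment's care is verifying that the prefactor $(-1)^{x}$ indeed depends solely on the row index, so that it can legitimately be pulled out into a diagonal matrix acting on the left, and that this diagonal matrix is orthogonal rather than merely invertible — it is this orthogonality (not just invertibility) that is needed to preserve singular values exactly. Everything else is the elementary linear algebra recorded above, and the lemma then lets all subsequent spectral-norm estimates be carried out on the cleaner matrix $\mathbf{\Phi'}$.
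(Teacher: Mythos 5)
Your proof is correct, and it rests on the same core observation as the paper's: the prefactor $(-1)^{x}$ in $\phi(x,y)=(-1)^{x}(-1)^{yx\bmod p}$ depends on only one index, so $\mathbf{\Phi}$ differs from $\mathbf{\Phi'}$ by a sign change that cannot affect singular values. The mechanics differ slightly. The paper writes $\mathbf{\Phi}$ as a Hadamard product of $\mathbf{\Phi'}$ with a rank-one $\pm1$ matrix built from $\mathds{1}$ and $\boldsymbol{\xi}$, $\xi_x=(-1)^x$, and converts an SVD $\mathbf{\Phi'}=\sum_{\ell}\sigma_\ell(\mathbf{\Phi'})\mathbf{u}_\ell\mathbf{v}_\ell^\top$ term by term into an SVD of $\mathbf{\Phi}$, checking that the sign-flipped singular vectors remain orthonormal. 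You instead factor $\mathbf{\Phi}=\mathbf{D}\mathbf{\Phi'}$ with the orthogonal diagonal matrix $\mathbf{D}=\mathrm{diag}((-1)^x)$ and conclude via similarity of $\mathbf{\Phi}\mathbf{\Phi}^\top$ and $\mathbf{\Phi'}\mathbf{\Phi'}^\top$ (equivalently, invariance of the singular-value spectrum under multiplication by an orthogonal matrix). Your route is a bit more economical for this special case, since no SVD needs to be rebuilt; the paper's Hadamard argument is marginally more general in that it would equally handle a sign mask varying in both the row and the column index (any rank-one $\pm1$ pattern), which is not needed here. Incidentally, your explicit factorization also clarifies the orientation: since $x$ indexes rows (as used in the proof of the variance bound), the sign factor acts on rows, i.e. the rank-one mask should be $\boldsymbol{\xi}\mathds{1}^\top$ rather than $\mathds{1}\boldsymbol{\xi}^\top$ as written in the paper --- an immaterial slip, since either row or column sign-scaling preserves all $\sigma_\ell$.
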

\begin{proof} Let $\mathds{1}\in {\mathbb R}^{p-1}$ be a vector with all components equal to 1 and $\boldsymbol{\xi}\in {\mathbb R}^{p-1}$ be such that $\xi_x = (-1)^x$. Since $\mathbf{\Phi}_{x,y} = (-1)^{x}(-1)^{yx\bmod p}$ we have $\mathbf{\Phi} = (\mathds{1}\boldsymbol{\xi}^\top)\odot\mathbf{\Phi'}$ where $\odot$ is the Hadamard product. Let us consider the singular value decomposition (SVD) of $\mathbf{\Phi'}$, i.e. $
\mathbf{\Phi'}=\sum_{\ell=1}^{p-1} \sigma_\ell(\mathbf{\Phi'}){\mathbf u}_\ell{\mathbf v}_\ell^\top$.
Then, we have 
\begin{equation*}
\mathbf{\Phi}=(\mathds{1}\boldsymbol{\xi}^\top)\odot\mathbf{\Phi'}=\sum_{\ell=1}^{p-1} \sigma_i(\mathbf{\Phi'})(\mathds{1}\boldsymbol{\xi}^\top)\odot ({\mathbf u}_\ell{\mathbf v}^\top_\ell)=\sum_{\ell=1}^{p-1} \sigma_\ell(\mathbf{\Phi'})(\mathds{1}\odot{\mathbf u}_\ell)(\boldsymbol{\xi}\odot {\mathbf v}_\ell)^\top.
\end{equation*}
Since $\{\mathds{1}\odot{\mathbf u}_\ell\mid \ell\in [p-1]\}$ and $\{\boldsymbol{\xi}\odot {\mathbf v}_\ell \mid \ell\in [p-1]\}$ are both orthonormal systems of vectors, the latter expression is an SVD of $\mathbf{\Phi}$. Therefore, $\sigma_\ell(\mathbf{\Phi})=\sigma_\ell(\mathbf{\Phi'})$.
\end{proof}

A final result concerning the largest singular value of $\mathbf{\Phi'}$ requires some additional lemmas. Let us denote the vector $[(-1)^x]_{x\in {\mathbb Z}_p}\in {\mathbb R}^{p}$ by ${\mathbf a}$. Let $\omega=e^{\frac{2\pi {i}}{p}}$ be a primitive $p$th root of unity. Other primitive roots of unity are $\omega_2, \cdots, \omega_{p-1}$ where $\omega_\ell = \omega^\ell, \ell\geq 0$. Let $\mathbf{U}_\ell = [\omega^{jk}_\ell]_{j,k\in {\mathbb Z}_p}$, then the matrix
$
\frac{1}{\sqrt{p}}\mathbf{U}_\ell
$
is unitary for $\ell\in {\mathbb Z}_p^\ast$. In fact, $\frac{1}{\sqrt{p}}\mathbf{U}_1$ is a discrete Fourier transform (DFT) matrix. From unitarity, we obtain $\|\mathbf{U}_\ell\|=\sqrt{p}$. Let us denote $\mathbf{U}_1=\begin{bmatrix}
{\mathbf b}_0, \cdots, {\mathbf b}_{p-1}
\end{bmatrix}$.

\begin{lemma}\label{lem:a} The vector $\mathbf{a}$ can be decomposed as  
$
{\mathbf a} = \frac{2}{p}\sum_{\ell=0}^{p-1}\frac{1}{1+\omega^{-\ell}}{\mathbf b}_\ell
$.
\end{lemma}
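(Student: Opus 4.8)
The plan is to recognize that $\mathbf{b}_\ell$, the $\ell$-th column of $\mathbf{U}_1$, is the vector $[\omega^{j\ell}]_{j\in\mathbb{Z}_p}$, and that $\{\mathbf{b}_0,\ldots,\mathbf{b}_{p-1}\}$ is an orthogonal system with $\mathbf{b}_\ell^\dag\mathbf{b}_{\ell'}=p\,\delta_{\ell\ell'}$ (this is just the unitarity of $\tfrac{1}{\sqrt p}\mathbf{U}_1$ already noted in the excerpt). Hence any vector, in particular $\mathbf{a}=[(-1)^x]_{x\in\mathbb{Z}_p}$, expands as $\mathbf{a}=\tfrac{1}{p}\sum_{\ell=0}^{p-1}(\mathbf{b}_\ell^\dag\mathbf{a})\,\mathbf{b}_\ell$. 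So the whole lemma reduces to computing the inner products $\mathbf{b}_\ell^\dag\mathbf{a}=\sum_{x\in\mathbb{Z}_p}\overline{\omega^{x\ell}}(-1)^x=\sum_{x=0}^{p-1}(-\omega^{-\ell})^x$ and checking this equals $\tfrac{2}{1+\omega^{-\ell}}$.

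The key step is therefore the evaluation of the geometric sum $S_\ell:=\sum_{x=0}^{p-1}(-\omega^{-\ell})^x$. Since $p$ is an odd prime, $(-\omega^{-\ell})^p=(-1)^p\omega^{-\ell p}=-1\neq 1$ for every $\ell$ (as $\omega^{-\ell p}=1$), so the ratio $-\omega^{-\ell}$ is never a root of the equation $z^p=1$ in a way that makes the denominator vanish; concretely $-\omega^{-\ell}\neq 1$ because $\omega^{-\ell}=-1$ would force $2\ell\equiv p\pmod{2p}$, impossible for odd $p$ with $\omega$ a primitive $p$-th (not $2p$-th) root. Thus the finite geometric series formula applies cleanly:
\begin{equation*}
S_\ell=\frac{1-(-\omega^{-\ell})^p}{1-(-\omega^{-\ell})}=\frac{1-(-1)}{1+\omega^{-\ell}}=\frac{2}{1+\omega^{-\ell}}.
\end{equation*}
Substituting $\mathbf{b}_\ell^\dag\mathbf{a}=S_\ell=\tfrac{2}{1+\omega^{-\ell}}$ into the orthogonal expansion $\mathbf{a}=\tfrac1p\sum_{\ell}(\mathbf{b}_\ell^\dag\mathbf{a})\mathbf{b}_\ell$ gives exactly $\mathbf{a}=\tfrac{2}{p}\sum_{\ell=0}^{p-1}\tfrac{1}{1+\omega^{-\ell}}\mathbf{b}_\ell$, as claimed.

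I expect no serious obstacle here; the only thing to be careful about is the bookkeeping of conjugates (whether one uses $\omega^{-\ell}$ or $\omega^{\ell}$ in the denominator, which depends on the convention for $\mathbf{U}_1$ versus its conjugate transpose) and the verification that $1+\omega^{-\ell}\neq 0$ for all $\ell$, which as noted holds precisely because $p$ is odd. One could alternatively phrase the argument entirely in coordinates: assert $\mathbf{a}=\tfrac2p\sum_\ell\tfrac{1}{1+\omega^{-\ell}}\mathbf{b}_\ell$ and verify the $x$-th coordinate identity $(-1)^x=\tfrac2p\sum_{\ell=0}^{p-1}\tfrac{\omega^{x\ell}}{1+\omega^{-\ell}}$ by a partial-fractions / root-of-unity-filter computation, but the orthogonal-basis route above is shorter and is the one I would write up.
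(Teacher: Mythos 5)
Your proposal is correct and follows essentially the same route as the paper: expand $\mathbf{a}$ in the orthogonal system formed by the columns of $\mathbf{U}_1$ (unitarity of $\tfrac{1}{\sqrt p}\mathbf{U}_1$) and evaluate each coefficient as the finite geometric sum $\sum_{x=0}^{p-1}(-\omega^{-\ell})^x=\tfrac{2}{1+\omega^{-\ell}}$. Your explicit check that $1+\omega^{-\ell}\neq 0$ for odd $p$ is a small detail the paper leaves implicit, but otherwise the arguments coincide.
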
 
\begin{proof} From unitarity, we conclude that $\left\{{\mathbf e}_\ell=\frac{1}{\sqrt{p}}{\mathbf b}_\ell\right\}_{\ell=0}^{p-1}$ is an orthonormal basis in ${\mathbb C}^{p}$. Therefore,
$
{\mathbf a} = \sum_{\ell=0}^{p-1}({\mathbf e}_\ell^\dag {\mathbf a}){\mathbf e}_\ell$. After computation
\begin{equation*}
({\mathbf e}_\ell^\dag {\mathbf a})=\frac{1}{\sqrt{p}}\sum_{x=0}^{p-1}\omega^{-x\ell}(-1)^x = \frac{1}{\sqrt{p}}\sum_{x=0}^{p-1} (-\omega^{-\ell})^x= \frac{1}{\sqrt{p}}\cdot\frac{1-(-\omega^{-\ell})^{p}}{1+\omega^{-\ell}}
=\frac{2}{\sqrt{p}}\cdot\frac{1}{1+\omega^{-\ell}},
\end{equation*}
we conclude that 
\begin{equation*}
\begin{split}
{\mathbf a} = \sum_{\ell=0}^{p-1}\frac{2}{\sqrt{p}}\cdot\frac{1}{1+\omega^{-\ell}}{\mathbf e}_\ell = \sum_{\ell=0}^{p-1}\frac{2}{p}\cdot\frac{1}{1+\omega^{-\ell}}{\mathbf b}_\ell.
\end{split}
\end{equation*}
\end{proof}
\begin{corollary}\label{corol} The matrix $\mathbf\Phi'$ can be represented as
$
\mathbf{\Phi'} = \sum_{\ell=0}^{p-1}\frac{2}{p}\cdot\frac{1}{1+\omega^{-\ell}}\mathbf{\Omega}_\ell
$, where $\mathbf{\Omega}_\ell$ is a submatrix of $\mathbf{U}_\ell$ obtained after deletion of the first row and the first column. 
\end{corollary}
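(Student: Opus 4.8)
The plan is simply to apply Lemma~\ref{lem:a} entrywise to the defining formula of $\mathbf{\Phi'}$ and then recognize the matrices that appear. Recall that $\mathbf{\Phi'}_{x,y}=(-1)^{yx\bmod p}=a_{yx\bmod p}$, i.e. the $(yx\bmod p)$-th coordinate of the vector $\mathbf{a}=[(-1)^x]_{x\in\mathbb{Z}_p}$. Since the $\ell$-th column of $\mathbf{U}_1=[\omega^{jk}]_{j,k\in\mathbb{Z}_p}$ is $\mathbf{b}_\ell=[\omega^{j\ell}]_{j\in\mathbb{Z}_p}$, the decomposition in Lemma~\ref{lem:a} reads coordinatewise as $a_m=\frac{2}{p}\sum_{\ell=0}^{p-1}\frac{1}{1+\omega^{-\ell}}\,\omega^{m\ell}$ for every $m\in\mathbb{Z}_p$. (Each denominator is nonzero: $1+\omega^{-\ell}=0$ would force $\omega^{-\ell}=-1$, impossible since $p$ is odd.)

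Next I would substitute $m=yx\bmod p$ into this scalar identity. The only observation needed is that reducing the exponent modulo $p$ is harmless, because $\omega^p=1$ gives $\omega^{(yx\bmod p)\ell}=\omega^{yx\ell}$. Hence, for $(x,y)\in(\mathbb{Z}_p^\ast)^2$,
\begin{equation*}
\mathbf{\Phi'}_{x,y}=a_{yx\bmod p}=\frac{2}{p}\sum_{\ell=0}^{p-1}\frac{1}{1+\omega^{-\ell}}\,\omega^{yx\ell}=\frac{2}{p}\sum_{\ell=0}^{p-1}\frac{1}{1+\omega^{-\ell}}\,\omega_\ell^{xy}.
\end{equation*}

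Finally I would identify the matrix whose $(x,y)$-entry equals $\omega_\ell^{xy}$. By definition $\mathbf{U}_\ell=[\omega_\ell^{jk}]_{j,k\in\mathbb{Z}_p}$, so deleting the row $j=0$ and the column $k=0$ leaves exactly $[\omega_\ell^{jk}]_{j,k\in\mathbb{Z}_p^\ast}$, which is $\mathbf{\Omega}_\ell$; thus $(\mathbf{\Omega}_\ell)_{x,y}=\omega_\ell^{xy}$. Assembling the displayed identity over all entries $(x,y)\in(\mathbb{Z}_p^\ast)^2$ yields $\mathbf{\Phi'}=\frac{2}{p}\sum_{\ell=0}^{p-1}\frac{1}{1+\omega^{-\ell}}\mathbf{\Omega}_\ell$, as claimed. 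There is no genuine obstacle here; the only points requiring care are the bookkeeping that the ``$\bmod p$'' in the exponent disappears because $\omega$ is a $p$-th root of unity, and the correct identification that $\mathbf{\Omega}_\ell$ is precisely the block of $\mathbf{U}_\ell$ indexed by $\mathbb{Z}_p^\ast\times\mathbb{Z}_p^\ast$.
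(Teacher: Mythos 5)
Your proof is correct and follows essentially the same route as the paper's: both apply Lemma~\ref{lem:a} entrywise, substitute the index $yx\bmod p$, and use $\omega^p=1$ to replace $\omega^{\ell(yx\bmod p)}$ by $\omega_\ell^{xy}$, identifying the result with the entries of $\mathbf{\Omega}_\ell$. Your added remark that each denominator $1+\omega^{-\ell}$ is nonzero (since $p$ is odd) is a small welcome extra.
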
 
\begin{proof} Unfolded component-wise, the corollary is equivalent to 
\begin{equation*}
\begin{split}
(-1)^{jk\,{\rm mod\,\,}p} = \sum_{\ell=0}^{p-1}\frac{2}{p}\cdot\frac{1}{1+\omega^{-\ell}}\omega_\ell^{jk},
\end{split}
\end{equation*}
for any $j,k\in {\mathbb Z}_p^\ast$.
After setting $s=jk\,{\rm mod\,\,}p$ and using Lemma~\ref{lem:a} we conclude
\begin{equation*}
\begin{split}
(-1)^{jk\,{\rm mod\,\,}p} &= ({\mathbf a})_s=\sum_{\ell=0}^{p-1}\frac{2}{p}\cdot\frac{1}{1+\omega^{-\ell}}({\mathbf b}_\ell)_s = \sum_{\ell=0}^{p-1}\frac{2}{p}\cdot\frac{1}{1+\omega^{-\ell}}\omega^{\ell s} \\
&=\sum_{\ell=0}^{p-1}\frac{2}{p}\cdot\frac{1}{1+\omega^{-\ell}}\omega^{\ell (jk\bmod p)} = \sum_{\ell=0}^{p-1}\frac{2}{p}\cdot\frac{1}{1+\omega^{-\ell}}\omega^{jk}_\ell,
\end{split}
\end{equation*}
which concludes the proof.
\end{proof}

Now it remains to bound $\sigma_1(\mathbf{\Phi'})$.
\begin{lemma}\label{final} $\sigma_1(\mathbf{\Phi'})\leq c_1p^{1/2}\ln p$ where $c_1$ is some universal constant.
\end{lemma}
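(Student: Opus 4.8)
The plan is to feed the representation from Corollary~\ref{corol} into the triangle inequality for the spectral norm, using that each $\mathbf{\Omega}_\ell$ is a submatrix of $\mathbf{U}_\ell$ and hence $\|\mathbf{\Omega}_\ell\|\le\|\mathbf{U}_\ell\|=\sqrt{p}$. Since $p$ is an odd prime, $\omega^{-\ell}\ne-1$ for every $\ell$, so every coefficient $\frac{1}{1+\omega^{-\ell}}$ is finite, and applying the triangle inequality to $\mathbf{\Phi'}=\sum_{\ell=0}^{p-1}\frac{2}{p}\cdot\frac{1}{1+\omega^{-\ell}}\mathbf{\Omega}_\ell$ gives
\[
\sigma_1(\mathbf{\Phi'})\le\frac{2}{p}\sum_{\ell=0}^{p-1}\frac{\|\mathbf{\Omega}_\ell\|}{|1+\omega^{-\ell}|}\le\frac{2}{\sqrt{p}}\sum_{\ell=0}^{p-1}\frac{1}{|1+\omega^{-\ell}|}.
\]
Thus the lemma reduces to the scalar estimate $\sum_{\ell=0}^{p-1}\frac{1}{|1+\omega^{-\ell}|}\ll p\ln p$.

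To prove this estimate I would use the elementary identity $|1+e^{i\theta}|=2|\cos(\theta/2)|$, so that $|1+\omega^{-\ell}|=2|\cos(\pi\ell/p)|$. Writing $m=2\ell-p$ (an odd integer, since $p$ is odd) and using $\cos(\pi\ell/p)=\sin\!\big(\tfrac{\pi m}{2p}\big)$ up to sign, as $\ell$ runs over $\{0,\dots,p-1\}$ the value $m$ runs over the $p$ odd integers in $[-p,p-2]$, and $\pi m/(2p)\in[-\pi/2,\pi/2]$. The bound $|\sin x|\ge\frac{2}{\pi}|x|$ on $[-\pi/2,\pi/2]$ then yields $|\cos(\pi\ell/p)|\ge|m|/p$, hence
\[
\sum_{\ell=0}^{p-1}\frac{1}{|1+\omega^{-\ell}|}=\frac12\sum_{\ell=0}^{p-1}\frac{1}{|\cos(\pi\ell/p)|}\le\frac{p}{2}\sum_{\substack{|m|\le p\\ m\ \mathrm{odd}}}\frac{1}{|m|}\le p\sum_{m=1}^{p}\frac{1}{m}\ll p\ln p,
\]
using the harmonic bound $\sum_{m\le p}1/m\ll\ln p$. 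Combining the two displays gives $\sigma_1(\mathbf{\Phi'})\ll\sqrt{p}\,\ln p$, which is the claim for an appropriate absolute constant $c_1$.

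The only step that demands a little care is the index substitution $m=2\ell-p$: one must check its parity, its exact range, and the fact — already guaranteed by $p$ odd — that no denominator vanishes; after that, invoking $|\sin x|\ge\frac{2}{\pi}|x|$ on the correct interval and the harmonic-sum estimate is routine. I do not anticipate a genuine obstacle here: once Corollary~\ref{corol} is available, Lemma~\ref{final} is essentially the statement that the coefficient sequence $\big(\tfrac{1}{|1+\omega^{-\ell}|}\big)_\ell$ has $\ell^1$-norm $O(p\ln p)$, which is a standard cyclotomic computation.
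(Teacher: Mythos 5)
Your proposal is correct and follows essentially the same route as the paper: expand $\mathbf{\Phi'}$ via Corollary~\ref{corol}, apply the triangle inequality with $\|\mathbf{\Omega}_\ell\|\le\|\mathbf{U}_\ell\|=\sqrt{p}$, and reduce to showing the coefficient sum is $O(p\ln p)$ (equivalently $\frac{2}{p}\sum_\ell|1+\omega^{-\ell}|^{-1}\ll\ln p$). The only difference is cosmetic: you handle the scalar sum in one stroke using $|\sin x|\ge\frac{2}{\pi}|x|$ on all of $[-\pi/2,\pi/2]$ after the substitution $m=2\ell-p$, whereas the paper splits into the range $\ell\in[p/4,3p/4]$ (harmonic sum) and its complement (bounded contribution); both yield the same $\sqrt{p}\ln p$ bound.
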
 
\begin{proof}
Using Corollary~\ref{corol} we conlcude
\begin{equation*}
\begin{split}
\|\mathbf{\Phi'}\|=\left\|\sum_{\ell=0}^{p-1}\frac{2}{p}\cdot\frac{1}{1+\omega^{-\ell}}\mathbf{\Omega}_\ell\right\|\leq \sum_{\ell=0}^{p-1}\frac{2}{p}\cdot\frac{1}{|1+\omega^{-\ell}|}\|\mathbf{\Omega}_\ell\|.
\end{split}
\end{equation*}
The spectral norm of every submatrix is not greater than that of the matrix, therefore, $\|\mathbf{\Omega}_\ell\|\leq \|\mathbf{U}_\ell\|=\sqrt{p}$. Thus, 
\begin{equation*}
\begin{split}
\|\mathbf{\Phi'}\|\leq \sqrt{p}\sum_{\ell=0}^{p-1}\frac{2}{p}\frac{1}{|1+\omega^{-\ell}|}.
\end{split}
\end{equation*}
Now it remains to bound the sum $\sum_{\ell=0}^{p-1}\frac{2}{p}\frac{1}{|1+\omega^{-\ell}|}=\sum_{\ell=0}^{p-1}\frac{2}{p}\frac{1}{|1+\omega^{\ell}|}$. Let $\theta=\frac{2\pi \ell}{p}$. Note  that $\theta\in [0,2\pi)$ and
\begin{equation*}
\begin{split}
|1+\omega^{\ell}| = |1+e^{{i}\theta}| = (2+2\cos(\theta))^{1/2}=2\left|\cos\left(\frac{\theta}{2}\right)\right|.
\end{split}
\end{equation*}
Let us denote $2\psi=\theta-\pi$. Thus, $|1+\omega^{\ell}| =2\left|\cos\left(\frac{2\psi+\pi}{2}\right)\right| = 2|\sin(\psi)|\geq |\psi|$ if $\psi\in \left[-\frac{\pi}{4}, \frac{\pi}{4}\right]$. Note that $\psi\in \left[-\frac{\pi}{4}, \frac{\pi}{4}\right]$ if and only if $-\frac{\pi}{4}\leq \frac{\pi k}{p}-\frac{\pi}{2}\leq \frac{\pi}{4}$, or $\frac{1}{4}p\leq k\leq \frac{3}{4}p$. Thus, we have 
\begin{equation*}
\begin{split}
\sum_{\ell\in \left[\frac{1}{4}p, \frac{3}{4}p\right]\cap {\mathbb Z}_p}\frac{2}{p}\cdot\frac{1}{|1+\omega^{\ell}|}&\leq \sum_{\ell\in \left[\frac{1}{4}p, \frac{3}{4}p\right]\cap {\mathbb Z}_p}\frac{2}{p}\cdot\frac{1}{\left|\frac{\pi \ell}{p}-\frac{\pi}{2}\right|}=\frac{4}{\pi}\sum_{\ell\in \left[\frac{1}{4}p, \frac{3}{4}p\right]\cap {\mathbb Z}_p}\frac{1}{|2\ell-p|} \\
&\leq\frac{8}{\pi}\sum_{j=1}^{\lceil p/2 \rceil}\frac{1}{j} \ll \ln p.
\end{split}
\end{equation*}
Since $\left|1+\omega^{\ell}\right| = 2|\sin(\psi)|\geq 1$ if $\psi\in \left[-\frac{\pi}{2}-\frac{\pi}{4}\right]\cup \left[\frac{\pi}{4}, \frac{\pi}{2}\right]$, then
\begin{equation*}
\begin{split}
\sum_{\ell\in {\mathbb Z}_p:\,\,\psi\in\left[-\frac{\pi}{2}-\frac{\pi}{4}\right]\cup \left[\frac{\pi}{4}, \frac{\pi}{2}\right]}\frac{2}{p}\cdot\frac{1}{\left|1+\omega^{\ell}\right|}\asymp 1.
\end{split}
\end{equation*}
Thus, the total sum satisfies
\begin{equation*}
\begin{split}
\sum_{\ell\in {\mathbb Z}_p}\frac{2}{p}\frac{1}{\left|1+\omega^{\ell}\right|}\ll \ln p.
\end{split}
\end{equation*}
\end{proof}
\begin{proof}\textbf{of Theorem~\ref{major}}.  
Using Lemma~\ref{spectr} we have ${\Var}[f(Y)]\leq \frac{\sigma_1(\mathbf{\Phi})^2}{(p-1)^2}$. Lemma~\ref{phi_prime} additionally gives us ${\Var}[f(Y)]\leq \frac{\sigma_1(\mathbf{\Phi'})^2}{(p-1)^2}$.  Finally, using Lemma~\ref{final} we conlude that ${\Var}[f(Y)]\ll \frac{\ln^2 p}{p}$.
\end{proof}

\subsection{Near Orthogonality of Discrete Logarithm's Parity Bits}\label{sec:dlp}

As mentioned in Section 3, the main tool for adapting the proof of \cite{DBLP:conf/icml/Shalev-ShwartzS17} to our needs is the Boas-Bellman inequality, which we present below.

\begin{lemma}[Boas-Bellman inequality] \label{lem:boas_bellman} Let $h_1,\ldots,h_{m},g$ be elements of an inner product space. Then
$$
\sum_{i=1}^{m}\langle h_i, g\rangle^2\le\|g\|^2\left(\max_i\|h_i\|^2+\sqrt{\sum_{i\ne j}\langle h_i,h_j\rangle^2}\right).
$$
\end{lemma}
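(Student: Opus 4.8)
The plan is to run the classical ``squaring'' argument for Boas--Bellman-type inequalities, which needs nothing beyond the Cauchy--Schwarz inequality applied twice. Write $a_i := \langle h_i, g\rangle$ and set $S := \sum_{i=1}^{m} a_i^2$. If $S = 0$ the asserted inequality is trivial, so assume $S > 0$. The key move is to test $g$ against the specific combination $v := \sum_{i=1}^{m} a_i h_i$. On one hand, unwinding the definition of the $a_i$ and using Cauchy--Schwarz,
$$
S = \sum_{i=1}^{m} a_i \langle h_i, g\rangle = \langle v, g\rangle \le \|v\|\,\|g\|,
$$
so $S^2 \le \|g\|^2\,\|v\|^2$. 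On the other hand, expanding $\|v\|^2$ and splitting off the diagonal,
$$
\|v\|^2 = \sum_{i=1}^{m} a_i^2\,\|h_i\|^2 + \sum_{i \ne j} a_i a_j \langle h_i, h_j\rangle .
$$

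Next I would bound the two pieces separately. The diagonal term is at most $(\max_i \|h_i\|^2)\,S$. For the off-diagonal term, apply Cauchy--Schwarz over the index set $\{(i,j) : i \ne j\}$, treating $(a_i a_j)$ and $(\langle h_i, h_j\rangle)$ as vectors indexed by such pairs:
$$
\sum_{i \ne j} a_i a_j \langle h_i, h_j\rangle \le \sqrt{\sum_{i \ne j} a_i^2 a_j^2}\;\sqrt{\sum_{i \ne j} \langle h_i, h_j\rangle^2} \le S\,\sqrt{\sum_{i \ne j} \langle h_i, h_j\rangle^2},
$$
where the last inequality uses the crude bound $\sum_{i \ne j} a_i^2 a_j^2 \le \sum_{i,j} a_i^2 a_j^2 = S^2$. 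Plugging both estimates into $S^2 \le \|g\|^2\|v\|^2$ gives
$$
S^2 \le \|g\|^2\left( (\max_i \|h_i\|^2)\,S + S\,\sqrt{\sum_{i \ne j} \langle h_i, h_j\rangle^2}\right),
$$
and dividing through by $S > 0$ produces exactly the claimed bound.

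The argument is short and every step is routine; the only place calling for a little foresight is the choice of test vector $v = \sum_i a_i h_i$ (as opposed to normalizing the $h_i$ individually), after which the otherwise wasteful estimate $\sum_{i\ne j} a_i^2 a_j^2 \le S^2$ is precisely what lets the common factor $S$ cancel. I would also note that no linear independence or normalization of the $h_i$ is used anywhere, which is what makes the lemma applicable to the family $\{h_a\}$ of discrete-logarithm parity bits in the later sections, where those functions are neither orthogonal nor of equal norm.
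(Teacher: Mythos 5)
Your proof is correct: the Cauchy--Schwarz step $S=\langle v,g\rangle\le\|v\|\,\|g\|$ with $v=\sum_i a_i h_i$, the split of $\|v\|^2$ into diagonal and off-diagonal parts, the crude bound $\sum_{i\ne j}a_i^2a_j^2\le S^2$, and the final cancellation of $S$ all go through (in the real inner product space setting the lemma is stated in), and the degenerate case $S=0$ is handled. The paper itself does not prove the lemma --- it only cites \cite{boas1941general} and \cite{bellman1944almost} --- and your argument is essentially the classical one found there, so it serves as a sound self-contained substitute; your closing observation that no normalization or independence of the $h_i$ is needed is exactly why the lemma applies to the non-orthogonal family $\{h_a\}$ later in the paper.
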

\begin{proof} Can be found in the works of \cite{boas1941general} and \cite{bellman1944almost}.
\end{proof}

As we can see, this inequality turns into Bessel's inequality for an orthonormal sequence $\{h_i\}$. In our case, the functions $h_a(x)$ given by \eqref{eq:DL_parity_bit} are not pairwise orthogonal. However, it can be shown that they are approximately pairwise orthogonal. This is what we will do in this subsection. First, we derive the distribution of $\log_A B$ when $A$ and $B$ are sampled uniformly at random from $\mathcal{G}\setminus\{1\}$.

\begin{lemma}\label{lem:unif}
    Let $(\mathcal{G},\circ)$ be a finite cyclic group of prime order $p>2$. Let $A,B\sim\mathcal{G}\setminus\{1\}$. Then the distribution of $\log_B A$ is uniform over $\{1,\ldots,p-1\}$.
\end{lemma}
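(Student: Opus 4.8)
The plan is to reduce the claim to a counting statement about pairs $(A,B)$ with $A,B \in \mathcal{G}\setminus\{1\}$ and a fixed value $\log_B A = k$. Fix a primitive element $g$ of $\mathcal{G}$ (which exists since $\mathcal{G}$ is cyclic of prime order, so in fact every non-identity element is primitive). Writing $A = g^{i}$ and $B = g^{j}$ with $i,j \in \{1,\ldots,p-1\}$, the pair $(A,B)$ ranges uniformly over $(\mathcal{G}\setminus\{1\})^2$ exactly when $(i,j)$ ranges uniformly over $\{1,\ldots,p-1\}^2$. Since $B$ also has order $p$, the relation $\log_B A = k$ means $B^{k} = A$, i.e. $g^{jk} = g^{i}$, i.e. $jk \equiv i \pmod p$. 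Note $k$ must lie in $\{1,\ldots,p-1\}$: it cannot be $0$ because $A \neq 1$.

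The key step is then: for a fixed target $k \in \{1,\ldots,p-1\}$, count the number of pairs $(i,j) \in \{1,\ldots,p-1\}^2$ with $i \equiv jk \pmod p$. For each of the $p-1$ choices of $j$, the value $i := (jk \bmod p)$ is uniquely determined, and since $p$ is prime and $j,k \not\equiv 0$, we have $jk \not\equiv 0 \pmod p$, so the forced value of $i$ automatically lies in $\{1,\ldots,p-1\}$ rather than being $0$. Hence there are exactly $p-1$ valid pairs for each $k$, out of $(p-1)^2$ total, giving $\Pr[\log_B A = k] = 1/(p-1)$ for every $k \in \{1,\ldots,p-1\}$, which is the claimed uniformity.

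I would present this as: (i) invoke primitivity to coordinatize $\mathcal{G}\setminus\{1\}$ by exponents; (ii) translate $\log_B A = k$ into the modular equation $jk \equiv i \pmod p$; (iii) observe the map $j \mapsto (jk \bmod p)$ is a bijection of $\mathbb{Z}_p^\ast$ onto itself (this is exactly the fact already used in Lemma~\ref{lem:mean}), so each residue class count is $p-1$; (iv) conclude the probability is constant in $k$ and hence uniform on $\{1,\ldots,p-1\}$. The only mild subtlety — really the single point that needs care rather than a genuine obstacle — is checking the endpoints: that $k \neq 0$ (forced by $A \neq 1$) and that the induced value of $i$ is never $0$ (forced by $p$ prime and $j,k$ invertible), so that we stay inside $\mathcal{G}\setminus\{1\}$ on both sides and no probability mass leaks to the excluded identity element. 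Everything else is a one-line bijection count.
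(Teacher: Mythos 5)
Your proof is correct and follows essentially the same route as the paper: for each fixed target value $k\in\{1,\ldots,p-1\}$ you count, for every choice of base, the unique non-identity element with that logarithm, yielding exactly $p-1$ of the $(p-1)^2$ pairs and hence probability $\frac{1}{p-1}$. The paper does this directly via $a=b^y$ without coordinatizing through a fixed generator, but that is a cosmetic difference; your explicit check that $k\neq 0$ and that the forced exponent is nonzero is the same implicit point the paper relies on.
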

\begin{proof}
    For any $y\in\{1,\ldots,p-1\}$, we have
    \begin{align*}
        &\Pr_{A,B\sim\mathcal{G}\setminus\{1\}}\left[\log_B A=y\right]=\E_{A,B\sim\mathcal{G}\setminus\{1\}}\left[\mathbb{I}(\log_B A=y)\right]\\
        &=\frac1{(p-1)^2}\sum_{a,b\in\mathcal{G}\setminus\{1\}}\mathbb{I}[\log_b a=y]=\frac{1}{(p-1)^2}\sum_{b\in\mathcal{G}\setminus\{1\}}\sum_{a\in\mathcal{G}\setminus\{1\}}\underbrace{\mathbb{I}[\log_b a=y]}_{=1\,\,\text{iff}\,\,a=b^y}\\
        &=\frac{1}{(p-1)^2}\sum_{b\in\mathcal{G}\setminus\{1\}}1=\frac{1}{p-1}.
    \end{align*}
\end{proof}

Now we are ready to prove the advertised bound \eqref{eq:inner_prod_small}.

\begin{lemma} \label{lem:sum_of_squares}     Let $(\mathcal{G},\circ)$ be a finite cyclic group of prime order $p>2$. Let $a\in\mathcal{G}\setminus\{1\}$. Consider a function $h_a(x)$ defined in \eqref{eq:DL_parity_bit}. Then
\begin{equation*}
\sum_{\substack{a,b\in\mathcal{G}\setminus\{1\}\\a\ne b}}{\mathbb E}[h_a(X)\cdot h_b(X)]^2\leq c_2 p\ln^2 p
\end{equation*}
for some universal constant $c_2$.
\end{lemma}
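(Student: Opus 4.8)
The plan is to reduce the sum over pairs $(a,b)$ to a statement about the random variable $f(Y)$ from Theorem~\ref{major}, exploiting the group isomorphism $(\mathcal{G}\setminus\{1\},\cdot)\cong\mathbb{Z}_p^\ast$ induced by a fixed generator. First I would rewrite the inner product $\langle h_a,h_b\rangle$ in a form amenable to this. Fix any primitive element $g\in\mathcal{G}\setminus\{1\}$, so that every $x\in\mathcal{G}\setminus\{1\}$ is $x=g^{j}$ for a unique $j\in\mathbb{Z}_p^\ast$, and as $X\sim\mathcal{G}\setminus\{1\}$ the exponent $j$ is uniform on $\mathbb{Z}_p^\ast$. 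Writing $a=g^{\alpha}$, $b=g^{\beta}$ with $\alpha,\beta\in\mathbb{Z}_p^\ast$, one has $\log_a X = \alpha^{-1}j\bmod p$ and $\log_b X=\beta^{-1}j\bmod p$, so that
\begin{equation*}
\langle h_a,h_b\rangle=\E_{j\sim\mathbb{Z}_p^\ast}\left[(-1)^{\alpha^{-1}j\bmod p}\cdot(-1)^{\beta^{-1}j\bmod p}\right].
\end{equation*}
Substituting $X=\alpha^{-1}j\bmod p$ (which is again uniform on $\mathbb{Z}_p^\ast$ as $j$ ranges over $\mathbb{Z}_p^\ast$), this becomes $\E_{X\sim\mathbb{Z}_p^\ast}\left[(-1)^{X}\cdot(-1)^{YX\bmod p}\right]=f(Y)$ where $Y:=\alpha\beta^{-1}\bmod p$. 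The condition $a\ne b$ is exactly $\alpha\ne\beta$, i.e. $Y\ne 1$.

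Next I would handle the combinatorics of how often each value of $Y$ arises. As $(\alpha,\beta)$ ranges over all ordered pairs in $(\mathbb{Z}_p^\ast)^2$ with $\alpha\ne\beta$, the quantity $Y=\alpha\beta^{-1}\bmod p$ takes each value in $\mathbb{Z}_p^\ast\setminus\{1\}$ exactly $p-1$ times (for fixed $Y$, each choice of $\beta$ determines $\alpha=Y\beta$). Hence
\begin{equation*}
\sum_{\substack{a,b\in\mathcal{G}\setminus\{1\}\\a\ne b}}\langle h_a,h_b\rangle^2=(p-1)\sum_{Y\in\mathbb{Z}_p^\ast\setminus\{1\}}f(Y)^2\le(p-1)\sum_{Y\in\mathbb{Z}_p^\ast}f(Y)^2=(p-1)^2\,\E_{Y\sim\mathbb{Z}_p^\ast}[f(Y)^2].
\end{equation*}
By \eqref{eq:mean_f} the second moment equals the variance, so Theorem~\ref{major} gives $\E[f(Y)^2]=\Var[f(Y)]\le c\ln^2 p/p$. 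Combining, the full sum is bounded by $(p-1)^2\cdot c\ln^2 p/p\le c_2\, p\ln^2 p$ for a suitable universal constant $c_2$, as claimed.

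The steps above are essentially bookkeeping once the key identity $\langle h_a,h_b\rangle=f(\alpha\beta^{-1}\bmod p)$ is in place; the genuine mathematical content — the spectral-norm bound on $\mathbf{\Phi'}$ and the resulting variance estimate — has already been done in Theorem~\ref{major} and its supporting lemmas. So the main thing to get right here is the change of variables: one must be careful that $\log_a X$ really equals $\alpha^{-1}j\bmod p$ (this uses that $\mathcal{G}$ has prime order $p$, so $\gcd(\alpha,p)=1$ and $\alpha^{-1}$ exists), and that the reindexing $X=\alpha^{-1}j\bmod p$ is a bijection of $\mathbb{Z}_p^\ast$ leaving the uniform distribution invariant. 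A minor point worth stating explicitly is that $h_a$ is well-defined and the whole setup is independent of the choice of generator $g$, since the final bound does not reference $g$ at all. I expect no real obstacle beyond keeping these modular-arithmetic identities straight.
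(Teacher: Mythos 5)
Your proof is correct and follows essentially the same route as the paper: reduce $\langle h_a,h_b\rangle$ to $f(\log_b a)$ via a change of variables and then invoke Theorem~\ref{major}, with your explicit generator-based counting playing the role of the paper's Lemma~\ref{lem:unif} (uniformity of $\log_B A$). The only cosmetic difference is that you exclude the diagonal $a=b$ up front rather than bounding the full sum and subtracting it afterwards.
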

\begin{proof}
\begin{equation*}
\begin{split}
&\frac{1}{(p-1)^2}\sum_{a\in \mathcal{G}\setminus\{1\}}\sum_{b\in \mathcal{G}\setminus\{1\}}{\mathbb E}[h_a(X)\cdot h_b(X)]^2\\
&=\E_{A,B\sim \mathcal{G}\setminus\{1\}}\left[\E_{X\sim \mathcal{G}\setminus\{1\}}[h_A(X)\cdot h_B(X)]^2\right] \\
&=\E_{A,B\sim \mathcal{G}\setminus\{1\}}\left[\E_{X\sim \mathcal{G}\setminus\{1\}}\left[(-1)^{\log_A X}\cdot(-1)^{\log_B X}\right]^2\right]\\
&=\E_{A,B\sim \mathcal{G}\setminus\{1\}}\left[\E_{X\sim \mathcal{G}\setminus\{1\}}\left[(-1)^{\log_A X}\cdot(-1)^{\log_B A\cdot\log_A X\bmod p}\right]^2\right]\\
&=\E_{A,B\sim \mathcal{G}\setminus\{1\}}\left[\E_{X\sim \mathbb{Z}_p^\ast}\left[(-1)^{X}\cdot(-1)^{\log_B A\cdot X\bmod p}\right]^2\right]\\
&{\stackrel{\text{Lem.~\ref{lem:unif}}}{=}}\E_{Y\sim {\mathbb Z}_p^\ast}\left[\E_{X\sim \mathbb{Z}_p^\ast}\left[(-1)^{X}\cdot(-1)^{Y\cdot X\bmod p}\right]^2\right]=\Var_{Y\sim\mathbb{Z}_p^\ast}[f(Y)]\,\,{\stackrel{\text{Thm.~\ref{major}}}{\ll}}\,\,\frac{\ln^2 p}{p}
\end{split}
\end{equation*}
Therefore, $\sum_{a\in \mathcal{G}\setminus\{1\}}\sum_{b\in \mathcal{G}\setminus\{1\}}{\E}[h_a(X)\cdot h_b(X)]^2\ll p\ln^2 p$ and 
$$
\sum_{a\ne b}{\mathbb E}[h_a(X)\cdot h_b(X)]^2\leq c_3 p\ln^2 p-(p-1)\ll p\ln^2 p.
$$
\end{proof}

\subsection{Proof of Theorem~\ref{thm:main}}\label{sec:proof_main}
\begin{proof}
We prove the result for the squared loss $\ell(\hat{y},y)=\frac12(\hat{y}-{y})^2$. The classification loss is handled analogously. Define the vector-valued function 
$$
\mathbf{g}(x)=\frac{\partial}{\partial\mathbf{w}}f_\mathbf{w}(x),
$$
and let $\mathbf{g}(x)=(g_1({x}),g_2({x}),\ldots,g_d({x}))$ for real-valued functions $g_1,\ldots,g_d$. Then we have
\begin{align*}
    \boldsymbol{\mu}(\mathbf{w})&:=\E_{A\sim\mathcal{G}\setminus\{1\}}\nabla L_A(\mathbf{w})=\E_{A\sim\mathcal{G}\setminus\{1\}}\E_{X\sim\mathcal{G}\setminus\{1\}}[(f_\mathbf{w}(X)-h_A(X))\mathbf{g}(X)]\\
    &=\E_{X\sim\mathcal{G}\setminus\{1\}}[f_\mathbf{w}(X)\mathbf{g}(X)]-\E_{X\sim\mathcal{G}\setminus\{1\}}\left\{\mathbf{g}(X)\E_{A\sim\mathcal{G}\setminus\{1\}}[(-1)^{\log_A X}]\right\}\\
    &=\E_{X\sim\mathcal{G}\setminus\{1\}}[f_\mathbf{w}(X)\mathbf{g}(X)]-\E_{X\sim\mathcal{G}\setminus\{1\}}\mathbf{g}(X)\cdot\underbrace{\E_{Y\sim\mathbb{Z}^\ast_p}[(-1)^{Y}]}_{0}=\E_{X\sim\mathcal{G}\setminus\{1\}}[f_\mathbf{w}(X)\mathbf{g}(X)]
\end{align*}
Thus,
\begin{align}
&\E_{A\sim\mathcal{G}\setminus\{1\}}\left\|\nabla L_A(\mathbf{w})-\boldsymbol{\mu}(\mathbf{w})\right\|^2\notag\\
&=\E_{A\sim\mathcal{G}\setminus\{1\}}\left\|\E_{{X}\sim\mathcal{G}\setminus\{1\}}[(f_\mathbf{w}({X})-h_A({X}))\mathbf{g}({X})]-\E_{X\sim\mathcal{G}\setminus\{1\}}[f_\mathbf{w}({X})\mathbf{g}({X})]\right\|^2\notag\\
&=\E_{A\sim\mathcal{G}\setminus\{1\}}\left\|\E_{X\sim\mathcal{G}\setminus\{1\}}[h_{A}({X})\mathbf{g}({X})]\right\|^2=\E_{A\sim\mathcal{G}\setminus\{1\}}\sum_{j=1}^d\left(\E_{{X}\sim\mathcal{G}\setminus\{1\}}[h_{A}({X})g_j({X})]\right)^2\notag\\
&=\E_{A\sim\mathcal{G}\setminus\{1\}}\sum_{j=1}^d\langle h_{A}, g_j\rangle^2.\label{eq:sum_coord}
\end{align}
From Lemmas~\ref{lem:boas_bellman}~and~\ref{lem:sum_of_squares}, we have
\begin{align}
    &\E_{A\sim\mathcal{G}\setminus\{1\}}\langle h_A,g\rangle^2=\frac1{p-1}\sum_{a\in\mathcal{G}\setminus\{1\}}\langle h_a,g\rangle^2\notag\\
    &\le\frac{\|g\|^2}{p-1}\left(\max_{a\in\mathcal{G}\setminus\{1\}}\|h_a\|^2+\sqrt{\sum_{\substack{a,b\in\mathcal{G}\setminus\{1\}\\a\ne b}}\langle h_a, h_b\rangle^2}\right)\le\|g\|^2\left(\frac{1}{p-1}+\frac{\sqrt{c p}\ln p}{p-1}\right)\label{eq:mean_coord}
\end{align}
From \eqref{eq:sum_coord} and \eqref{eq:mean_coord}, we have
\begin{align}
    &\E_{A\sim\mathcal{G}\setminus\{1\}}\left\|\nabla L_A(\mathbf{w})-\E_{{X}\sim\mathcal{G}\setminus\{1\}}[f_\mathbf{w}({X})\mathbf{g}({X})]\right\|^2\le\left(\frac{1}{p-1}+\frac{\sqrt{c p}\ln p}{p-1}\right)\sum_{j=1}^d\|g_j\|^2\notag\\
    &=\left(\frac{1}{p-1}+\frac{\sqrt{c p}\ln p}{p-1}\right)\E_{X\sim\mathcal{G}\setminus\{1\}}\|\mathbf{g}(X)\|^2\le\left(\frac{1}{p-1}+\frac{\sqrt{c p}\ln p}{p-1}\right)d(\mathbf{w})^2\notag\\
    &\ll\frac{\ln p}{\sqrt{p}}d(\mathbf{w})^2.\notag
\end{align}
The proof for the classification loss $\ell(\hat{y},y)=s(\hat{y}\cdot y)$ can be reduced to the proof for the squared loss as it is done in  Theorem~1 of \citet{DBLP:conf/icml/Shalev-ShwartzS17}.
\end{proof}

\section{Additional Experiments}
Here we present the results of experiments that extend the scope of the paper. Namely, we are empirically investigating the learnability of the discrete logarithm \emph{itself} and of \emph{all} its bits, not just one bit. 
\paragraph{Low correlation of discrete logarithms.} We computed the mean squared  covariance 
\begin{equation}
    \E_{A,B\sim\mathbb{Z}_p^\ast}\left(\Cov_{X\sim\mathbb{Z}_p^\ast}[\log_A X,\log_B X]\right)^2\label{eq:mscov}
\end{equation}
for prime numbers in the interval $[3, 500]$. The results are shown in Figure~\ref{fig:cov}.
\begin{figure}
    \centering
    \includegraphics[width=.5\textwidth]{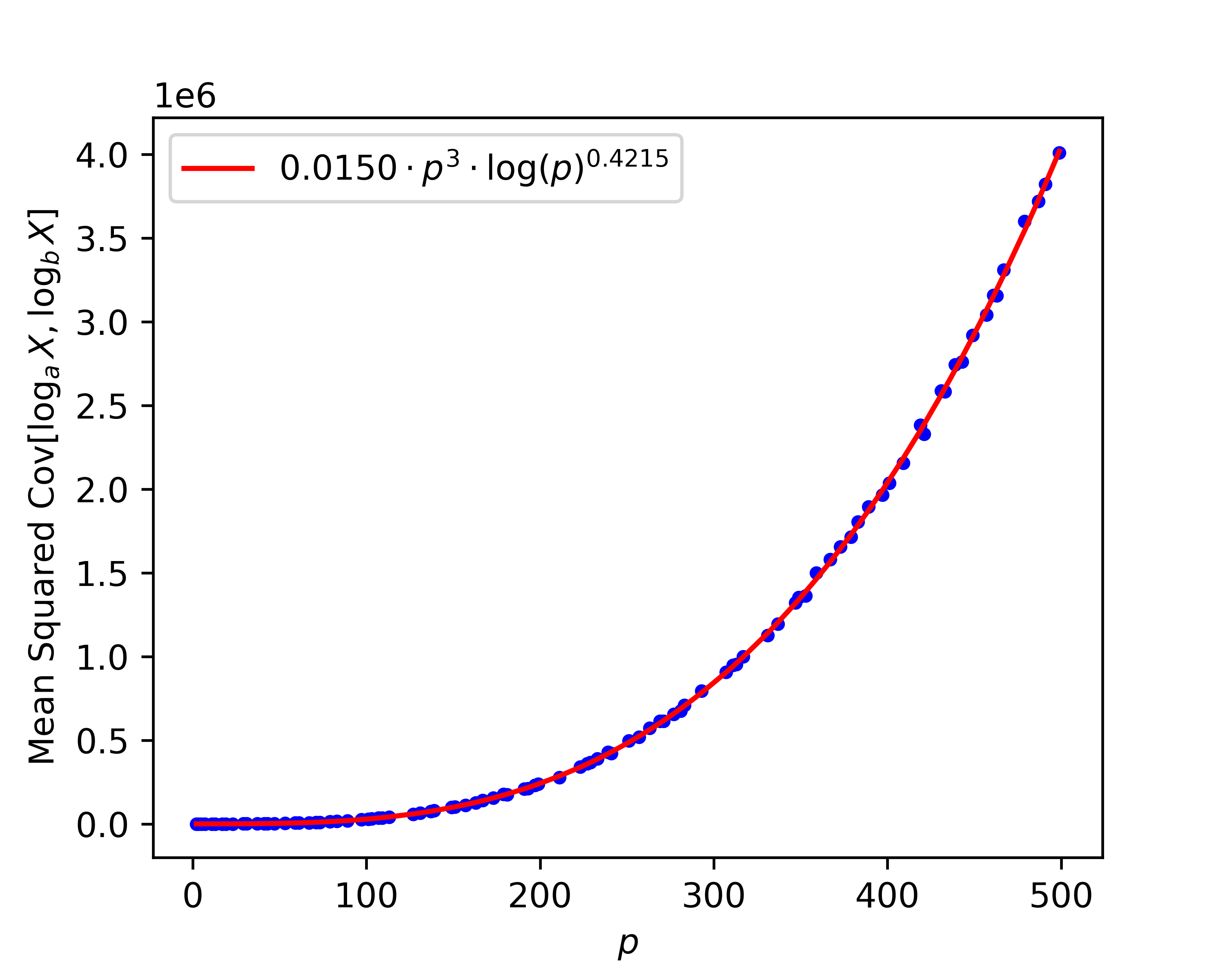}
    \caption{Mean squared covariance between two logarithms, $\log_a X$ and $\log_b X$, when $X$ is a random variable uniformly distributed on $\mathbb{Z}_p^\ast$.}
    \label{fig:cov}
\end{figure}
As we can see, the expression \eqref{eq:mscov} fits the curve $p\mapsto 0.015\cdot p^3\cdot(\ln p)^{0.42}$ well. This suggests that $\Cov_{X\sim\mathbb{Z}_p^\ast}[\log_a X,\log_b X]=\widetilde{O}(p^{3/2})$ on average over $a,b\in\mathbb{Z}_p^\ast$. Since the variance of the discrete logarithm is
$$
\Var_{X\sim\mathbb{Z}_p^\ast}[\log_a X]=\frac{1}{p-1}\sum_{k=1}^{p-1}k^2-\left(\frac{1}{p-1}\sum_{k=1}^{p-1}k\right)^2=\frac{p^2}{12}-\frac{p}{6}=O(p^2)
$$
we can conjecture that the average correlation is
\begin{equation}
\Corr_{X\sim\mathbb{Z}_p^\ast}[\log_a X,\log_b X]=\widetilde{O}\left(\frac{1}{\sqrt{p}}\right).\label{eq:cor_bound}
\end{equation}
Thus, using this estimate in the Boas-Bellman inequality for the class of ``standardized'' discrete logarithms $\{f_a(x)\mid a\in\mathbb{Z}_p^\ast\}$, where 
$$
f_a(x)=\frac{\log_a x-\frac{p}2}{\sqrt{\frac{p^2}{12}-\frac{p}{6}}},\qquad x\in\mathbb{Z}_p^\ast,
$$
we can show that this class is also hard to learn by gradient-based methods. The only thing missing is a rigorous proof of the bound \eqref{eq:cor_bound}. We leave it to our future work.

\paragraph{Failure to learn all bits of the discrete logarithm.} Here we follow the experimental setup from Section~\ref{sec:experim} with the difference that the output of the neural network is not only the parity bit, but all the bits of the discrete logarithm. As a loss function, we use the sum of the cross-entropies for each bit. The results for two different bit lengths are shown in Figure~\ref{fig:all_bits}.
\begin{figure}
    \centering
    \,\qquad\includegraphics[width=.45\textwidth, valign=t]{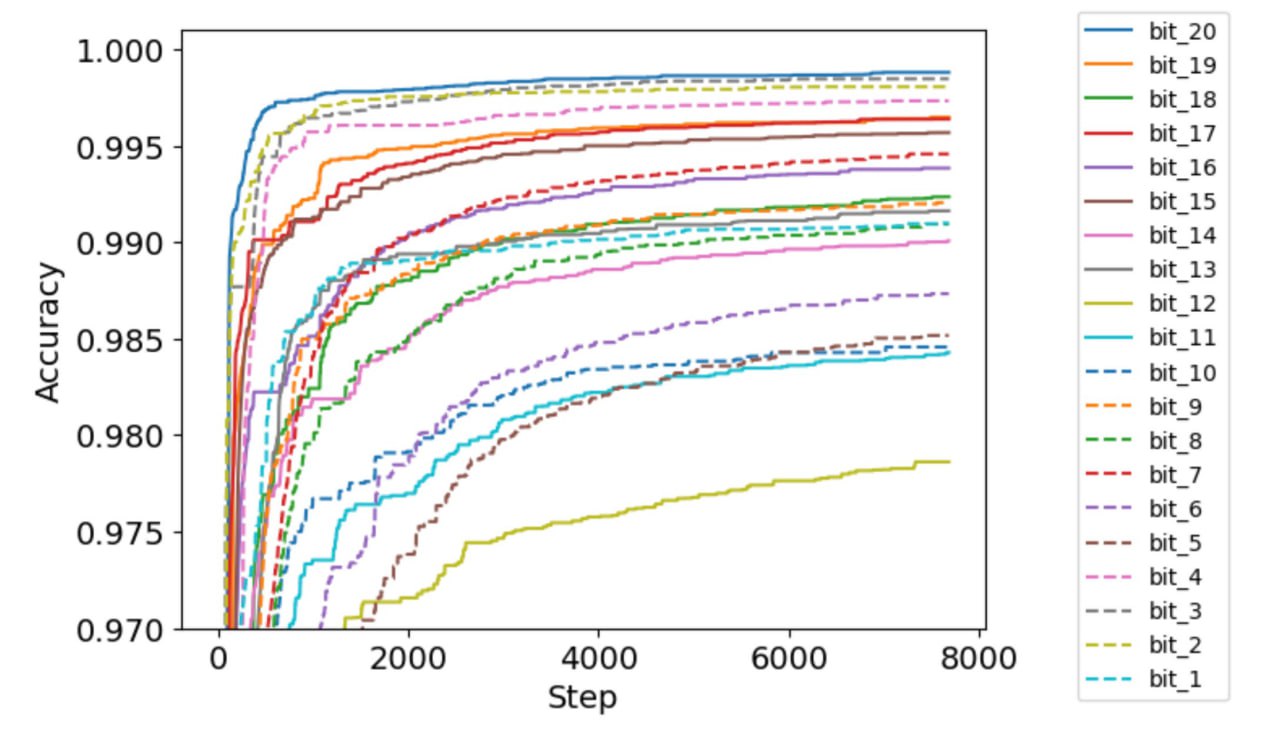}\hfill\includegraphics[width=.4\textwidth, valign=t]{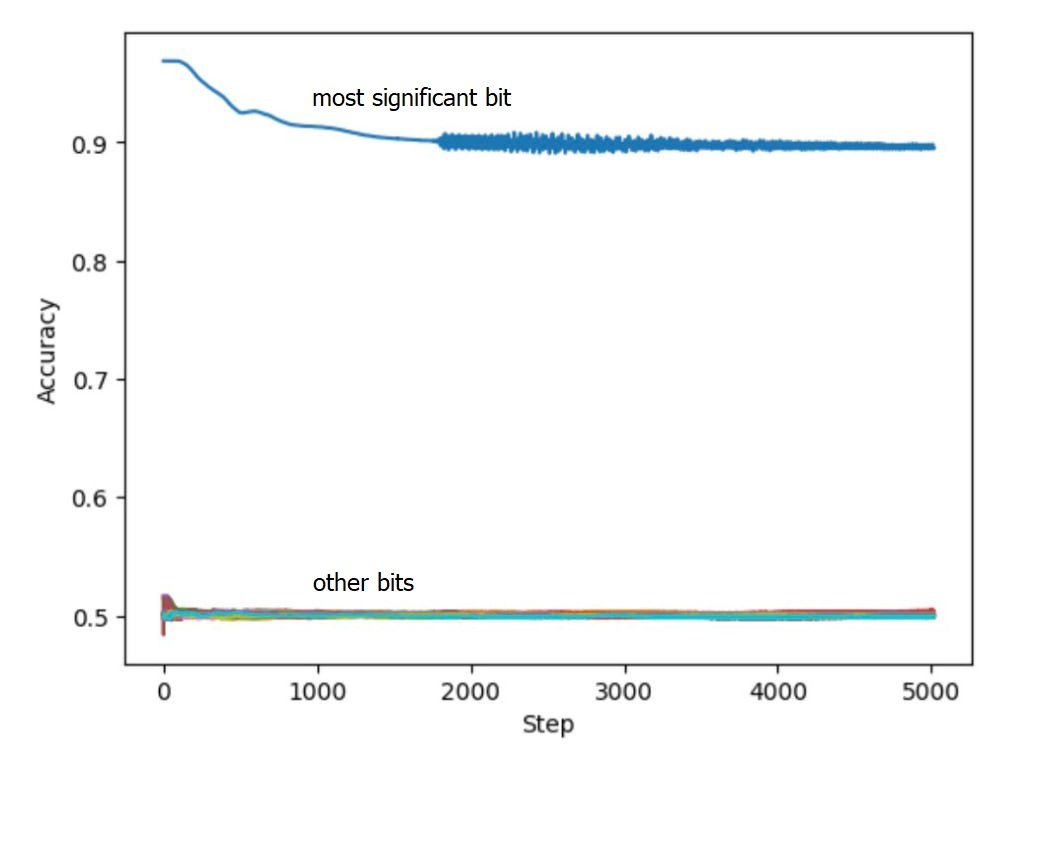}\qquad\,
    \caption{Test Accuracies when learning \emph{all} bits of the discrete logarithm in $(\mathbb{Z}_p,+)$ with a single neural network. Bitlengths of $p$: 20 (left) and 40 (right).}
    \label{fig:all_bits}
\end{figure}
As in the case of one bit, we see that for a longer bit length, the gradient method is not able to learn all the bits of the discrete logarithm. Note that in both cases, the more significant bits are learned better than the less significant ones. We leave the study of this phenomenon to our future work.

\acks{This research has been funded by Nazarbayev University under Faculty-development competitive research grants program for 2023-2025 Grant \#20122022FD4131, PI R. Takhanov.}

\bibliography{ref}

\end{document}